\documentclass{article}

 \usepackage[preprint]{neurips_2026}

\usepackage[utf8]{inputenc} 
\usepackage[T1]{fontenc}    
\usepackage[table]{xcolor}
\definecolor{dark-blue}{RGB}{0,0,191}
\usepackage[hidelinks,colorlinks=true,citecolor=dark-blue,linkcolor=dark-blue,urlcolor=dark-blue, backref=page]{hyperref}
\usepackage{url}            
\usepackage{booktabs}       
\usepackage{amsfonts}       
\usepackage{nicefrac}       
\usepackage{microtype}      
\usepackage{xcolor}         

\title{Pessimism-Free Offline Learning in General-Sum Games via KL Regularization}

\usepackage{pifont}

\usepackage{amsmath}
\usepackage{amssymb}
\usepackage{mathtools}
\usepackage{amsthm}
\usepackage{color}
\mathtoolsset{showonlyrefs}

\usepackage{algorithm}
\usepackage{algpseudocode}

\usepackage{enumitem}

\usepackage[capitalize,noabbrev]{cleveref}

\usepackage{xurl} 
\usepackage{listings}
\usepackage{xcolor}
\usepackage{thmtools}

\usepackage{pifont}
\theoremstyle{plain}
\newtheorem{theorem}{Theorem}[section]

\newtheorem{lemma}[theorem]{Lemma}

\theoremstyle{definition}

\newtheorem{assumption}[theorem]{Assumption}
\theoremstyle{remark}

\usepackage{algorithm}
\usepackage{algorithmicx}
\usepackage{algpseudocode}

\DeclareMathOperator*{\argmin}{argmin}
\DeclareMathOperator*{\argmax}{argmax}

\usepackage[textsize=tiny]{todonotes}

\author{%
  Claire Chen\\
  The Division of Physics,
Mathematics and Astronomy \\
California Institute of Technology\\
  \texttt{clairechen@caltech.edu} \\
    \And
    Yuheng Zhang \\
  Department of Computer Science \\
  University of Illinois Urbana-Champaign \\
  \texttt{ yuhengz2@illinois.edu} \\
}

\begin{document}

\maketitle

\begin{abstract}

Offline multi-agent reinforcement learning in general-sum settings is challenged by the distribution shift between logged datasets and target equilibrium policies. While standard methods rely on manual pessimistic penalties, we demonstrate that KL regularization suffices to stabilize learning and achieve equilibrium recovery. We propose General-sum Anchored Nash Equilibrium (GANE), which recovers regularized Nash equilibria at an accelerated statistical rate of $\tilde{\mathcal{O}}(1/n)$. For computational tractability, we develop General-sum Anchored Mirror Descent (GAMD), an iterative algorithm converging to a Coarse Correlated Equilibrium at the standard rate of $\tilde{\mathcal{O}}(1/\sqrt{n} + 1/T)$. These results establish KL regularization as a standalone mechanism for pessimism-free offline learning that achieves equivalent or accelerated rates in multi-player general-sum games.


\end{abstract}

\section{Introduction}
\label{sec:intro}
Offline reinforcement learning (RL) provides a framework for developing decision-making policies from pre-collected datasets, making it suitable for applications where active exploration is restricted by safety or financial costs \citep{levine2020offline}. While much of the offline literature has historically addressed single-agent optimization, many strategic environments involve interactions among multiple agents with distinct objectives. General-sum games offer a flexible model for these scenarios, capturing strategic complexities in economic markets, multi-agent coordination, and the alignment of large language models through collective human preferences \citep{ouyang2022training, ye2024online}. In this offline multi-agent setting, the learning objective is to recover an equilibrium policy using only fixed historical data.

A central difficulty in offline general-sum games is distribution shift, which occurs when the learned strategy explores regions of the state-action space that are poorly represented in the logged dataset \citep{fujimoto2019off, zhang2023offline}. Standard methodologies typically address this shift through the principle of explicit pessimism \citep{jin2021pessimism, zhang2023offline}. These approaches incorporate manually designed lower confidence bounds (LCB) or penalty terms to suppress the estimated values of actions that lack sufficient offline data support \citep{cui2022offline, zhong2022pessimistic}. While theoretically sound, these pessimistic mechanisms often require the construction of complex uncertainty quantifiers over joint action spaces and intensive hyperparameter tuning to balance conservatism with performance \citep{cui2022offline, zhang2023offline}.

Recently, KL regularization with respect to a fixed reference policy has become a common objective for stabilizing multi-agent learning and enforcing behavioral constraints \citep{ouyang2022training, ye2024online}. While this framework has gained significant prominence in single-agent settings—for instance, in language model alignment through Reinforcement Learning from Human Feedback (RLHF) \citep{ouyang2022training}—its application to multi-player general-sum settings remains relatively scarce. In existing literature that utilizes KL regularization within game-theoretic contexts, the regularization is typically treated solely as an anchor to a reference distribution, while distribution shift is handled by another separate, explicit pessimism mechanism such as lower confidence bounds \citep{ye2024online}.However, the capacity of KL regularization to act as a standalone, implicit pessimism mechanism that independently stabilizes learning against distribution shift in multi-player general-sum games remains unestablished.

In this work, we establish that KL regularization serves as a standalone, pessimism-free mechanism to stabilize offline learning in general-sum games. To achieve this, we leverage the reference-anchored data coverage framework \citep{chen2026fast}, which bypasses the conceptual circularity inherent in prior works by anchoring coverage requirements to a known reference policy rather than an unknown equilibrium. We first introduce General-sum Anchored Nash Equilibrium (GANE), a theoretical framework that recovers regularized Nash equilibria at an accelerated statistical rate of $\tilde{\mathcal{O}}(1/n)$. Our analysis reveals that the independent product structure of Nash policies enables the exact cancellation of first-order estimation errors, allowing GANE to bypass the standard $\tilde{\mathcal{O}}(1/\sqrt{n})$ bottleneck encountered in unregularized offline games \citep{zhang2023offline, cui2022offline}. Recognizing that computing a Nash equilibrium is often computationally intractable, we further develop General-sum Anchored Mirror Descent (GAMD), a decentralized iterative algorithm. We prove that GAMD recovers a Coarse Correlated Equilibrium at the standard rate of $\tilde{\mathcal{O}}(1/\sqrt{n} + 1/T)$ for $n$ samples and $T$ iterations. Our primary contributions are summarized as follows:


\begin{enumerate}
    \item \textbf{Pessimism-Free Equilibrium Recovery}: We establish that KL regularization serves as an effective alternative to explicit pessimism, allowing for equilibrium recovery in general-sum settings without the overhead of tuning confidence-based bonuses.
    \item \textbf{Accelerated Rates for Nash Equilibria}: We prove that GANE achieves an $\tilde{\mathcal{O}}(1/n)$ statistical rate. This result demonstrates that the product structure of Nash equilibria can be leveraged to achieve higher statistical efficiency than previously established for unregularized general-sum offline learning.
    \item \textbf{Tractable Iterative Learning}: We show that GAMD provides a computationally efficient path to recovering Coarse Correlated Equilibria, achieving the standard $\tilde{\mathcal{O}}(1/\sqrt{n})$ statistical rate through simple, decentralized updates without relying on explicit pessimism.
\end{enumerate}

\paragraph{Organization.}
The remainder of this paper is organized as follows. In Section~\ref{sec:background}, we formalize the $m$-player general-sum game, define the KL-regularized value functions, and establish the offline learning model. Section~\ref{sec:coverage} reviews the landscape of concentrability assumptions in multi-agent systems and the reference-anchored coverage requirements. Section~\ref{sec:idealized_algorithm_ne} introduces the GANE algorithm and presents the accelerated $\tilde{\mathcal{O}}(1/n)$ statistical rate for Nash equilibria. In Section~\ref{sec:practical_algorithm}, we propose the iterative GAMD algorithm and characterize its optimization and sample complexity guarantees for recovering Coarse Correlated Equilibria. We review related work in Section~\ref{sec:related_work}, provide a broader discussion on future directions in Section~\ref{sec:discussion}, and conclude the paper in Section~\ref{sec:conclusion}.


\begin{table}[h]
\label{tab:comparison_general_sum}
\centering
\small
\begin{tabular}{lcccc}
\toprule
\textbf{Paper} & \textbf{NE Rate} & \textbf{CCE Rate} & \textbf{Optimization ($T$)} & \textbf{Pessimism} \\ \midrule
\citet{cui2022provably} & $\tilde{\mathcal{O}}(1/\sqrt{n})$ & --- & Oracle & Yes \\
\citet{zhang2023offline} & $\tilde{\mathcal{O}}(1/\sqrt{n})$ & $\tilde{\mathcal{O}}(1/\sqrt{n})$ & Oracle & Yes \\
\rowcolor{gray!10} \textbf{This Work} & $\mathbf{\tilde{\mathcal{O}}(1/n)}$ & $\mathbf{\tilde{\mathcal{O}}(1/\sqrt{n})}$ & $\mathbf{\tilde{\mathcal{O}}(1/T)}$ & \textbf{No} \\ \bottomrule
\end{tabular}
\caption{Comparison of offline learning guarantees in multi-player general-sum games. We report statistical rates in terms of samples $n$ and optimization convergence in terms of iterations $T$. Our work is the first to establish a fast $\tilde{\mathcal{O}}(1/n)$ rate for Nash Equilibria and a  computationally tractable solution for CCE without explicit pessimism. The ``Oracle'' labels denote a reliance on either black-box stage-game equilibrium solvers \citep{cui2022provably} or global optimization oracles over general function classes \citep{zhang2023offline}.}
\end{table}

\section{Problem Formulation}
\label{sec:background}

We consider an $m$-player general-sum contextual bandit game defined by the tuple $\mathcal{M} = (\mathcal{N}, \mathcal{X}, \{\mathcal{A}_i\}_{i=1}^m, \{r_i^\star\}_{i=1}^m, \rho)$. Here, $\mathcal{N} = [m] \coloneqq \{1, \dots, m\}$ denotes the set of players, $\mathcal{X}$ is the context space, and $\mathcal{A}_i$ is the finite action space for player $i$. In each round, a context $x \sim \rho$ is sampled. All players simultaneously select actions $a_i \in \mathcal{A}_i$, forming a joint action profile $\boldsymbol{a} = (a_1, \dots, a_m) \in \mathcal{A} \coloneqq \prod_{i=1}^m \mathcal{A}_i$. Each player $i$ receives a deterministic reward $r_i^\star(x, \boldsymbol{a}) \in [0, 1]$.

A joint policy $\pi: \mathcal{X} \to \Delta(\mathcal{A})$ maps contexts to a distribution over joint actions. We denote the marginalized policy for player $i$ as $\pi_i(\cdot \mid x)$ and the joint policy of all other players as $\pi_{-i}(\cdot \mid x)$.

\subsection{KL-Regularized Objectives}
To address distribution shift without explicit pessimism, we utilize KL regularization relative to fixed reference policies $\pi^{\mathrm{ref}} = (\pi^{\mathrm{ref}}_1, \dots, \pi^{\mathrm{ref}}_m)$. For a regularization parameter $\eta > 0$, the Q-function for player $i$ is 
\begin{align}
    Q_i^{\pi}(x, \boldsymbol{a}) \coloneqq r_i^\star(x, \boldsymbol{a}).
\end{align}The KL-regularized value function $V_i^\pi(x)$ is defined as:
\begin{equation}
    V_i^{\pi}(x) = \mathbb{E}_{\boldsymbol{a} \sim \pi(\cdot|x)} [Q_i^{\pi}(x, \boldsymbol{a})] - \eta^{-1} \mathrm{KL}\big(\pi_i(\cdot|x) \,\|\, \pi^{\mathrm{ref}}_i(\cdot|x)\big).
    \label{eq:value_kl_def}
\end{equation}
Each player $i$ seeks to maximize their own regularized value function. We define the best-response value against an opponent profile $\pi_{-i}$ as 
\begin{align}
V_i^{\dagger, \pi_{-i}}(x) \coloneqq \max_{\pi'_i} V_i^{\pi'_i, \pi_{-i}}(x).    
\end{align}
We also denote $\Lambda_{\eta, m}\coloneqq \exp(\eta m)$ as the joint anchoring constant measuring the density shift relative to the reference policy.


\subsection{Target Equilibria}
\label{subsec:equilibria}

We analyze two equilibrium concepts in the regularized $m$-player game. Conceptually, an equilibrium represents a stable state where no player can unilaterally improve their own regularized value by changing their strategy, assuming all other players' strategies remain fixed. The primary distinction between these concepts lies in the structural constraints placed on the joint policy and the resulting ability for players to coordinate.

\paragraph{Regularized Nash Equilibrium (NE).} A joint policy $\pi^\star$ is a regularized NE if it is a \textbf{product policy}, factorizing as
$\pi^\star(\boldsymbol{a} \mid x) = \prod_{i=1}^m \pi_i^\star(a_i \mid x)$. 
This captures scenarios where players act completely independently. The equilibrium condition implies that no player $i$ can increase their value by switching to a different independent strategy $\pi_i'$, given that opponents follow their respective marginal strategies $\pi_{-i}^\star$. For a learned product policy $\hat{\pi} = \prod_i \hat{\pi}_i$, we evaluate the NE Total Exploitability:
\begin{equation}
    \mathrm{Gap}_{\mathrm{NE}}(\hat{\pi}) \coloneqq \sum_{i=1}^m \mathbb{E}_{x \sim \rho} \left[ V_i^{\dagger, \hat{\pi}_{-i}}(x) - V_i^{\hat{\pi}_1 \times \dots \times \hat{\pi}_m}(x) \right].
\end{equation}

\paragraph{Regularized Coarse Correlated Equilibrium (CCE).} A regularized CCE is a broader concept that allows the joint policy $\pi^\star$ to be \textbf{any distribution} in the simplex $\Delta(\mathcal{A})$, potentially incorporating correlations between players (e.g., through a shared latent signal). The stability condition here implies that a player cannot improve their expected return by choosing to deviate from the suggested joint distribution before observing their specific action. While every Nash Equilibrium is a CCE, a CCE allows for complex coordination that is forbidden in the NE setting. For a learned (possibly correlated) joint policy $\bar{\pi}$, we evaluate the CCE Total Exploitability:
\begin{equation}
    \mathrm{Gap}_{\mathrm{CCE}}(\bar{\pi}) \coloneqq \sum_{i=1}^m \mathbb{E}_{x \sim \rho} \left[ V_i^{\dagger, \bar{\pi}_{-i}}(x) - V_i^{\bar{\pi}}(x) \right].
\end{equation}


While we focus on NE and CCE in this work, we consider the extension to Correlated Equilibrium (CE) a promising direction for future research. Our current analysis serves as a foundational demonstration of how KL regularization alone—without the need for explicit pessimism—suffices to establish both a fast $\tilde{\mathcal{O}}(1/n)$ rate for Nash equilibria and a computationally tractable solution for CCE. By characterizing these two distinct regimes, we lay the groundwork for a broader regularized framework across the entire spectrum of game-theoretic equilibria.

\subsection{Offline Learning Model}
\label{subsec:offline_model}

The learner has access to a static dataset $\mathcal{D} = \{ (x_{\tau}, \boldsymbol{a}_{\tau}, \mathbf{r}_{\tau}) \}_{\tau=1}^{n}$ generated by an unknown behavioral distribution $\mu(x, \boldsymbol{a})$. For each interaction $\tau$, the observed reward vector $\mathbf{r}_{\tau} = (r_{\tau,1}, \dots, r_{\tau,m})$ consists of noisy realizations of the true rewards: $r_{\tau,i} = r_i^\star(x_{\tau}, \boldsymbol{a}_{\tau}) + \xi_{\tau,i}$, where $\xi_{\tau,i}$ is independent zero-mean $1$-sub-Gaussian noise. We assume access to function classes $\{\mathcal{Q}_i\}_{i=1}^m$ for reward estimation, where each $\mathcal{Q}_i$ is a finite class of functions mapping $\mathcal{X} \times \mathcal{A} \to [0, 1]$. Following standard literature \citep{xie2021batch, zhang2026beyond}, we assume \textit{realizability}, i.e., $r_i^\star \in \mathcal{Q}_i$ for all $i \in [m]$. For any estimate $\hat{Q}_i \in \mathcal{Q}_i$, the pointwise regression error is defined as $\mathcal{Z}_i(x, \boldsymbol{a}) \coloneqq \hat{Q}_i(x, \boldsymbol{a}) - r_i^\star(x, \boldsymbol{a})$.



\section{Unilateral Data Coverage}
\label{sec:coverage}

Establishing sample-efficient recovery in offline multi-agent systems requires that the historical dataset $\mathcal{D}$ provides sufficient coverage of the state-action regions relevant to the target objective. In this section, we contrast the traditional unilateral concentrability assumption with the reference-anchored framework used in this work.

\subsection{Standard Unilateral Concentrability and the Circularity Problem}

The principle of \textit{unilateral concentrability} is often employed to bypass the exponential scaling of joint action spaces in multi-agent environments \citep{cui2022provably, cui2022offline}. This principle requires the dataset only to support scenarios where a single agent deviates from a target strategy while all others remain stationary.

\begin{assumption}[Classical Unilateral Concentrability \citep{cui2022provably}]
Let $\pi^* = (\pi_1^*, \dots, \pi_m^*)$ be a target equilibrium policy. Let $\Pi_{\mathrm{uni}}(\pi^*) \coloneqq \bigcup_{i=1}^m \left( \Pi_i \times \{\pi_{-i}^*\} \right)$ be the set of unilateral deviations from $\pi^*$. There exists a constant $C_{\mathrm{uni}}^* \ge 1$ such that for any $\pi' \in \Pi_{\mathrm{uni}}(\pi^*)$:
\begin{align}
    \left\| \frac{\rho(x)\pi'(\boldsymbol{a} \mid x)}{\mu(x, \boldsymbol{a})} \right\|_\infty \le C_{\mathrm{uni}}^*.
\end{align}
\end{assumption}

While this assumption provides favorable scaling ($\sum |\mathcal{A}_i|$ rather than $\prod |\mathcal{A}_i|$), it suffers from the \textbf{Curse of the Unknown Optimum}. Because $C_{\mathrm{uni}}^*$ is anchored to an unknown optimal equilibrium $\pi^*$, the assumption is conceptually circular: one must already know the target equilibrium to determine if the data is sufficient to find it. This makes the condition difficult to verify or estimate in practical strategic settings.

\subsection{The Reference-Anchored Unilateral Concentrability Framework}

To resolve the circularity problem, we utilize the \textbf{Reference-Anchored} coverage framework recently proposed in \citet{chen2026fast}. This condition shifts the theoretical burden from an unknown optimum to a fixed and known reference policy $\pi^{\mathrm{ref}}$.

\begin{assumption}[Reference-Anchored Unilateral Concentrability \citep{chen2026fast}]
\label{ass:concentrability}
Let $\pi^{\mathrm{ref}} = (\pi_1^{\mathrm{ref}}, \dots, \pi_m^{\mathrm{ref}})$ be the fixed reference joint policy. We define the set of \emph{reference-anchored unilateral deviation policies} $\Pi_{\mathrm{ref-uni}}$ as the union of profiles where one player deviates while all others adhere strictly to the reference policy: 
\begin{align}
    \Pi_{\mathrm{ref-uni}} \coloneqq \bigcup_{i=1}^m \left( \Pi_i \times \{\pi_{-i}^{\mathrm{ref}}\} \right).
\end{align}
Let $\mu$ denote the joint distribution of the offline dataset $\mathcal{D}$. We assume there exists a constant $C_{\mathrm{uni}} \ge 1$ such that for any $\pi' \in \Pi_{\mathrm{ref-uni}}$:
\begin{align}
    \left\| \frac{\rho(x)\pi'(\boldsymbol{a} \mid x)}{\mu(x, \boldsymbol{a})} \right\|_\infty \le C_{\mathrm{uni}}.
\end{align}
\end{assumption}

By anchoring to $\pi^{\mathrm{ref}}$, this framework provides two critical advantages: 
(i) Verifiability, as $\pi^{\mathrm{ref}}$ is known to the learner and $C_{\mathrm{uni}}$ can be empirically estimated; and (ii) Pessimism-Free Stability, as anchoring to the reference policy via KL regularization allows the algorithm to handle distribution shift without explicit bonuses.

\section{Pessimism-Free Nash Equilibria}
\label{sec:idealized_algorithm_ne}

Building on the regularized objective \eqref{eq:value_kl_def}, we investigate whether anchoring to a reference policy can independently mitigate the distribution shift inherent in offline multi-agent settings. We first introduce General-sum Anchored Nash Equilibrium (GANE), an algorithmic framework that leverages KL regularization to stabilize learning on the offline contextual bandit without the overhead of explicit pessimistic bonuses. GANE provides the foundational basis for our analysis, illustrating how the geometric properties of the regularizer interact with the structural properties of Nash equilibria.

GANE operates via empirical risk minimization over the static offline dataset. For every player $i \in [m]$, the algorithm first constructs an empirical estimate of the reward Q-function, $\hat{Q}_i$, by solving a regularized least-squares regression problem over the dataset $\mathcal{D}$ using the function class $\mathcal{Q}_i$. Given these empirical Q-functions, GANE assumes access to a stage-game equilibrium oracle that outputs a joint product policy $\hat{\pi}(\boldsymbol{a} \mid x) = \prod_{i=1}^m \hat{\pi}_i(a_i \mid x)$ forming an exact Regularized NE for the empirical game. Formally, $\hat{\pi}$ must satisfy the condition that no player $i$ can unilaterally improve their regularized value by deviating to any arbitrary policy $\pi_i'$, assuming the other players marginalize over $\hat{\pi}_{-i}$. 

\begin{algorithm}[t]
\caption{General-sum Anchored Nash Equilibrium (GANE)}
\label{alg:ideal_GANE}
\begin{algorithmic}[1]
    \State \textbf{Input:} Offline dataset $\mathcal{D}$, reference policies $\pi^{\mathrm{ref}} = (\pi_1^{\mathrm{ref}}, \dots, \pi_m^{\mathrm{ref}})$, regularization parameter $\eta > 0$, function classes $\{\mathcal{Q}_i\}_{i=1}^m$.
    \For{player $i = 1, \dots, m$}
        \State \textbf{Empirical Q-Update:} Estimate the Q-function via least-squares regression:
        \State $\hat{Q}_i \leftarrow \argmin_{f \in \mathcal{Q}_i} \sum_{\tau=1}^n \left( f(x_{\tau}, \boldsymbol{a}_{\tau}) - r_{\tau,i} \right)^2$.
    \EndFor
    \State \textbf{Equilibrium Computation:} For every context $x \in \mathcal{X}$, find a joint product policy $\hat{\pi}(\boldsymbol{a} \mid x) = \prod_{i=1}^m \hat{\pi}_i(a_i \mid x)$ that is a Regularized Nash Equilibrium of the empirical game $\{\hat{Q}_i(x, \cdot)\}_{i=1}^m$. Namely, for all $i \in [m]$ and all $\pi_i' \in \Delta(\mathcal{A}_i)$:
    \State 
    \vspace{-0.5em}
    \begin{equation*}
    \begin{aligned}
        &\mathbb{E}_{\boldsymbol{a} \sim \hat{\pi}(\cdot \mid x)}[\hat{Q}_i(x, \boldsymbol{a})] - \eta^{-1}\mathrm{KL}(\hat{\pi}_i(\cdot \mid x) \,\|\, \pi_i^{\mathrm{ref}}(\cdot \mid x)) \\
        &\quad \ge \mathbb{E}_{a_i \sim \pi_i'(\cdot \mid x), \boldsymbol{a}_{-i} \sim \hat{\pi}_{-i}(\cdot \mid x)}[\hat{Q}_i(x, \boldsymbol{a})] - \eta^{-1}\mathrm{KL}(\pi_i'(\cdot \mid x) \,\|\, \pi_i^{\mathrm{ref}}(\cdot \mid x)).
    \end{aligned}
    \end{equation*}
    \For{player $i = 1, \dots, m$}
        \State \textbf{Value Update:} Update the empirical value function for Player $i$:
        \State $\hat{V}_i(x) \leftarrow \mathbb{E}_{\boldsymbol{a} \sim \hat{\pi}(\cdot \mid x)}[\hat{Q}_i(x, \boldsymbol{a})] - \eta^{-1}\mathrm{KL}(\hat{\pi}_i(\cdot \mid x) \,\|\, \pi_i^{\mathrm{ref}}(\cdot \mid x))$.
    \EndFor
    \State \textbf{Output:} The estimated regularized NE joint policy $\hat{\pi}$.
\end{algorithmic}
\end{algorithm}

\subsection{Error Decomposition and Best-Response Analysis}
\label{subsec:error_decomp}

To analyze the suboptimality of the learned policy $\hat{\pi}$, we decouple the optimization error from the statistical error induced by the finite offline dataset by pivoting through the empirical value functions $\hat{V}$. 

\begin{restatable}[Gap Decomposition]{lemma}{reOOgapdecomposition}
\label{lem:gap_decomp_main}
For any learned joint policy $\hat{\pi}$ and any player $i \in [m]$, the unilateral exploitability gap can be exactly decomposed as:
\begin{align}
    V_i^{\dagger, \hat{\pi}_{-i}}(x) - V_{i}^{\hat{\pi}}(x) =& \underbrace{\left( \hat{V}_i^{\dagger, \hat{\pi}_{-i}}(x) - \hat{V}_{i}^{\hat{\pi}}(x) \right)}_{\text{Term I: Optimization Gap}} + \underbrace{\left( V_i^{\dagger, \hat{\pi}_{-i}}(x) - \hat{V}_i^{\dagger, \hat{\pi}_{-i}}(x) \right)}_{\text{Term II: Best-Response Error}}+ \underbrace{\left( \hat{V}_{i}^{\hat{\pi}}(x) - V_{i}^{\hat{\pi}}(x) \right)}_{\text{Term III: On-Policy Error}}.
    \label{eq:Gap decomposition}
    \hspace{-1.5em}
\end{align}
\end{restatable}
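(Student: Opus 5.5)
This identity is purely algebraic: it is a telescoping add-and-subtract and needs no probabilistic or structural input. First I would fix notation for the empirical quantities. For any joint policy $\pi$, define
\[
\hat{V}_i^{\pi}(x) \coloneqq \mathbb{E}_{\boldsymbol{a} \sim \pi(\cdot \mid x)}[\hat{Q}_i(x,\boldsymbol{a})] - \eta^{-1}\mathrm{KL}\big(\pi_i(\cdot\mid x)\,\|\,\pi_i^{\mathrm{ref}}(\cdot\mid x)\big).
\]
This is \eqref{eq:value_kl_def} with $r_i^\star$ replaced by $\hat{Q}_i$. The empirical best-response value is then
\[
\hat{V}_i^{\dagger,\hat{\pi}_{-i}}(x) \coloneqq \max_{\pi_i'} \hat{V}_i^{\pi_i',\hat{\pi}_{-i}}(x).
\]
With this notation, $\hat{V}_i^{\hat{\pi}}(x)$ coincides with the quantity $\hat{V}_i(x)$ computed in the Value Update step of Algorithm~\ref{alg:ideal_GANE}. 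All four quantities appearing in \eqref{eq:Gap decomposition} are finite real numbers for each $x$, so the rearrangement below is legitimate.

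The key step is to start from the left-hand side $V_i^{\dagger,\hat{\pi}_{-i}}(x) - V_i^{\hat{\pi}}(x)$ and insert $0 = \hat{V}_i^{\dagger,\hat{\pi}_{-i}}(x) - \hat{V}_i^{\dagger,\hat{\pi}_{-i}}(x)$ and $0 = \hat{V}_i^{\hat{\pi}}(x) - \hat{V}_i^{\hat{\pi}}(x)$. Regrouping the six terms then gives three differences:
\begin{itemize}[leftmargin=*]
\item the pair of hatted terms, which is Term I;
\item true best-response minus empirical best-response, which is Term II;
\item empirical on-policy minus true on-policy, which is Term III.
\end{itemize}
Summing these recovers the left-hand side exactly, since each inserted quantity appears once with a plus sign and once with a minus sign.

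There is no real obstacle here. The only point needing care is that the maxima defining $V_i^{\dagger,\hat{\pi}_{-i}}$ and $\hat{V}_i^{\dagger,\hat{\pi}_{-i}}$ are attained. This holds because each objective is strictly concave in $\pi_i'$ over the simplex $\Delta(\mathcal{A}_i)$, with maximizer the Gibbs policy $\pi_i'(a_i\mid x) \propto \pi_i^{\mathrm{ref}}(a_i\mid x)\exp\big(\eta\,\mathbb{E}_{\boldsymbol{a}_{-i}\sim\hat{\pi}_{-i}}[Q(x,a_i,\boldsymbol{a}_{-i})]\big)$, where $Q$ is $r_i^\star$ or $\hat{Q}_i$ respectively. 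Both values are therefore well defined, and the maximizers may differ; that difference is exactly what Term II accounts for later.

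I would also note that for the GANE output, Term I vanishes (it equals $0$ by the NE condition), but this is not required for the identity itself. The decomposition holds for any $\hat{\pi}$.
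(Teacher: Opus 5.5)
Your proposal is correct and follows the paper's own argument: add and subtract $\hat V_i^{\dagger,\hat\pi_{-i}}(x)$ and $\hat V_i^{\hat\pi}(x)$, then regroup into the three differences. One small point: your claim that all four quantities are finite tacitly requires $\mathrm{KL}(\hat\pi_i\|\pi_i^{\mathrm{ref}})<\infty$ (otherwise Term III is $-\infty-(-\infty)$), which holds for the Gibbs-form outputs of GANE and GAMD, or whenever $\pi_i^{\mathrm{ref}}$ has full support, but not for a truly arbitrary $\hat\pi$.
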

The derivation follows algebraically by adding and subtracting the estimated values $\hat{V}_i^{\dagger, \hat{\pi}_{-i}}(x)$ and $\hat{V}_i^{\hat{\pi}}(x)$ to the true unilateral exploitability gap of Player $i$. In the GANE algorithm, since $\hat{\pi}$ is an exact Regularized NE of the empirical game, Term I vanishes identically. To characterize the evaluation errors (Terms II and III), we define the pointwise regression error as 
\begin{align}
    \mathcal{Z}_i(x, \boldsymbol{a}) \coloneqq \hat{Q}_i(x, \boldsymbol{a}) - r_i^\star(x, \boldsymbol{a}).
\end{align}


\subsection{Statistical Guarantees for GANE}

Equipped with these analytical tools and the reference-anchored coverage condition (Assumption~\ref{ass:concentrability}), we establish the primary statistical guarantee for GANE.


\begin{theorem}
\label{thm:fast_rate_rone}
Under Assumption~\ref{ass:concentrability} and assuming the reward functions are realizable within $\{\mathcal{Q}_i\}_{i=1}^m$, the estimated regularized NE joint policy $\hat{\pi}$ returned by GANE satisfies, with probability at least $1-\delta$:
\begin{align}
    \mathrm{Gap}_{\mathrm{NE}}(\hat{\pi}) \le \widetilde{\mathcal{O}}\left( \frac{m \eta C_{\mathrm{uni}} \Lambda_{\eta, m} \log (|\mathcal{Q}|/\delta)}{n} \right).
\end{align}
\end{theorem}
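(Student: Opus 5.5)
Starting from Lemma~\ref{lem:gap_decomp_main}, Term I vanishes because $\hat\pi$ is an exact regularized NE of the empirical game. It remains to bound Terms II and III for each player $i$ and sum over $i$. Write $\pi_i^{\dagger}$ for the true best response to $\hat\pi_{-i}$ and $\hat\pi_i^{\dagger}$ for the empirical one. Both have Gibbs form
\[
\pi_i^{\dagger}(a_i\mid x)\propto \pi_i^{\mathrm{ref}}(a_i\mid x)\exp\bigl(\eta\, \mathbb{E}_{\boldsymbol a_{-i}\sim\hat\pi_{-i}} r_i^\star(x,\boldsymbol a)\bigr),
\]
and $\hat\pi_i^\dagger$ is the same with $\hat Q_i$ in place of $r_i^\star$, so by the NE property $\hat\pi_i^\dagger=\hat\pi_i$. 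Define the marginal payoffs $g_i(a_i)=\mathbb{E}_{\hat\pi_{-i}} r_i^\star$ and $\hat g_i(a_i)=\mathbb{E}_{\hat\pi_{-i}}\hat Q_i$, and the error $\Delta_i=\hat g_i-g_i=\mathbb{E}_{\hat\pi_{-i}}\mathcal Z_i$. Then Terms II and III combine to
\[
V_i^{\dagger,\hat\pi_{-i}}-V_i^{\hat\pi}
=\eta^{-1}\log\mathbb{E}_{\pi^{\mathrm{ref}}_i}e^{\eta g_i}-\bigl(\langle\hat\pi_i,g_i\rangle-\eta^{-1}\mathrm{KL}(\hat\pi_i\|\pi_i^{\mathrm{ref}})\bigr)
=\eta^{-1}\mathrm{KL}(\hat\pi_i\,\|\,\pi_i^{\dagger}).
\]
This holds because the regularized value of any $\pi_i$ equals $\eta^{-1}\log Z(g_i)-\eta^{-1}\mathrm{KL}(\pi_i\|\pi_i^\dagger)$. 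This exact identity is the key point: the first-order term in $\Delta_i$ cancels, and the product structure is what makes $\hat\pi_i$ a Gibbs response to the same $\hat\pi_{-i}$ that the true best response faces.

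Next I bound $\mathrm{KL}(\hat\pi_i\|\pi_i^\dagger)$. Both are Gibbs distributions with log-partition function $A(f)=\log\mathbb{E}_{\pi_i^{\mathrm{ref}}}e^{\eta f}$, and the KL between them is the Bregman divergence of $A$. So
\[
\mathrm{KL}(\hat\pi_i\|\pi_i^\dagger)=A(g_i)-A(\hat g_i)-\langle\nabla A(\hat g_i),g_i-\hat g_i\rangle
\le \tfrac{\eta^2}{2}\sup_{t\in[0,1]}\mathrm{Var}_{\pi_i^{t}}(\Delta_i)
\le \tfrac{\eta^2}{2}\sup_t\mathbb{E}_{\pi_i^t}[\Delta_i^2],
\]
where $\pi_i^t$ is the Gibbs policy at $g_i+t\Delta_i$. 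Since $\hat g_i,g_i\in[0,1]$, each $\pi_i^t$ has density ratio at most $e^{\eta}$ relative to $\pi_i^{\mathrm{ref}}$. The same holds for every $\hat\pi_j$, because $\hat\pi_j$ is Gibbs in $\hat Q_j\in[0,1]$. By Jensen, $\Delta_i^2\le\mathbb{E}_{\hat\pi_{-i}}\mathcal Z_i^2$. Hence
\[
\mathbb{E}_{\pi_i^t}[\Delta_i^2]\le \mathbb{E}_{a_i\sim\pi_i^t,\,\boldsymbol a_{-i}\sim\hat\pi_{-i}}\mathcal Z_i^2 \le e^{\eta m}\,\mathbb{E}_{\pi^{\mathrm{ref}}}\mathcal Z_i^2=\Lambda_{\eta,m}\,\mathbb{E}_{\pi^{\mathrm{ref}}}\mathcal Z_i^2 .
\]
Here the ratio $e^{\eta}$ from each of the $m$ factors multiplies into $\Lambda_{\eta,m}$. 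Taking $\mathbb{E}_{x\sim\rho}$, the policy $\pi^{\mathrm{ref}}$ lies in $\Pi_{\mathrm{ref-uni}}$, so Assumption~\ref{ass:concentrability} gives
\[
\mathbb{E}_{\rho\times\pi^{\mathrm{ref}}}\mathcal Z_i^2\le C_{\mathrm{uni}}\,\mathbb{E}_\mu\mathcal Z_i^2 .
\]
A sharper route keeps one player's deviation inside the coverage set, but the crude bound already gives the target constant.

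Finally, the standard least-squares bound for finite realizable classes with sub-Gaussian noise gives $\mathbb{E}_\mu\mathcal Z_i^2\lesssim\log(m|\mathcal Q|/\delta)/n$ with probability $1-\delta/m$, via Bernstein/Freedman on the excess square loss plus a union bound over $\mathcal Q_i$. A union bound over players then yields the result. Combining everything, each player contributes $\eta^{-1}\cdot\eta^2\Lambda_{\eta,m}C_{\mathrm{uni}}\log(|\mathcal Q|/\delta)/n$, and summing over $m$ players gives the claimed bound.

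The main obstacle is the second step, the change of measure. It must control a second moment under the adaptive, data-dependent policies $\pi_i^t\times\hat\pi_{-i}$ by the reference measure, uniformly over the intermediate tilts. I would rely on the uniform density-ratio bound $e^{\eta}$ per player, which holds pointwise for every Gibbs policy with payoffs in $[0,1]$ and therefore sidesteps any dependence of $\hat\pi$ on the data.
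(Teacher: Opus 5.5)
Your proof is correct, but it controls the second-order term differently from the paper.

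The paper keeps Terms II and III separate. It bounds Term II by $\ell_\infty$-smoothness of the log-partition function and cancels the linear part $\mathbb{E}_{\hat\pi}[-\mathcal Z_i]$ against Term III $=\mathbb{E}_{\hat\pi}[\mathcal Z_i]$. What remains is $\tfrac{\eta}{2}\|Q_i^{\dagger,\hat\pi_{-i}}-\bar Q_i\|_\infty^2\le\tfrac{\eta}{2}\max_{a_i}\mathbb{E}_{\hat\pi_{-i}}[\mathcal Z_i^2]$. That worst-case action forces the paper to shift only the opponents to $\pi^{\mathrm{ref}}_{-i}$ and to build a deterministic greedy deviation $\tilde\pi_i$, so it must invoke the genuinely unilateral part of Assumption~\ref{ass:concentrability}.

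You instead write the whole per-player gap exactly as $\eta^{-1}\mathrm{KL}(\hat\pi_i\,\|\,\pi_i^\dagger)$. This is precisely the Bregman divergence the paper upper-bounds. You control it with the integral-form Taylor remainder, which gives the local norm $\sup_t\mathrm{Var}_{\pi_i^t}(\Delta_i)$ in place of $\|\Delta_i\|_\infty^2$. Every tilt $\pi_i^t$ is a Gibbs policy with payoffs in $[0,1]$, so all measures stay within density ratio $e^{\eta}$ per player of the reference. As a result you only ever need coverage of $\rho\times\pi^{\mathrm{ref}}$, the trivial member of $\Pi_{\mathrm{ref-uni}}$.

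Both routes give the same per-player constant $\tfrac{\eta}{2}\Lambda_{\eta,m}C_{\mathrm{uni}}\mathbb{E}_\mu[\mathcal Z_i^2]$. Your route turns the cancellation into an identity and shows that the NE result does not need the unilateral-deviation structure of the assumption. The paper's Term II/Term III split has a different advantage: it carries over verbatim to the CCE analysis, where the linear terms no longer cancel and must be bounded individually.
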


\subsection{Proof Sketch of Theorem~\ref{thm:fast_rate_rone}}

The accelerated $\widetilde{\mathcal{O}}(1/n)$ statistical rate is made possible by a unique symmetry in evaluating product policies. We expand the mechanism below; the detailed proof is provided in Appendix~\ref{app:idealized_stat_bound}.

\paragraph{Term III evaluation.} By expanding the value definitions, the on-policy evaluation error (Term III) evaluates exactly to the expected regression error under the joint policy:
\begin{align}
    \text{Term III}_i(x) = \hat{V}_i^{\hat{\pi}}(x) - V_i^{\hat{\pi}}(x) = \mathbb{E}_{\boldsymbol{a} \sim \hat{\pi}(\cdot \mid x)} \big[ \mathcal{Z}_i(x, \boldsymbol{a}) \big].
\end{align}

\paragraph{Term II evaluation.} To bound the best-response error (Term II), we leverage the 1-smoothness of the log-partition function associated with the KL-regularizer (see Appendix~\ref{app:idealized_stat_bound}, Step 1). The difference between the true and empirical best-response values is bounded by the first-order Q-value difference plus a second-order penalty. As derived in Equation~\eqref{Delta 1 smooth Q}, this yields:
\begin{align}
    \text{Term II}_i(x) \le \mathbb{E}_{a_i \sim \hat{\pi}_i^\dagger(\cdot \mid x), \boldsymbol{a}_{-i} \sim \hat{\pi}_{-i}(\cdot \mid x)} \left[ -\mathcal{Z}_i(x, \boldsymbol{a}) \right] + \frac{\eta}{2} \left\| Q_i^{\dagger, \hat{\pi}_{-i}}(x, \cdot) - \bar{Q}_i(x, \cdot) \right\|_\infty^2,
\end{align}
where $\bar{Q}_i(x, a_i) = \mathbb{E}_{\boldsymbol{a}_{-i} \sim \hat{\pi}_{-i}}[\hat{Q}_i(x, a_i, \boldsymbol{a}_{-i})]$ is the marginalized empirical Q-function.

\paragraph{The Cancellation Mechanism.} Crucially, because $\hat{\pi}$ is the empirical equilibrium, the learned marginalized policy $\hat{\pi}_i$ is identically the empirical best response $\hat{\pi}_i^\dagger$. Furthermore, since $\hat{\pi}$ is a product policy, its evaluation distribution matches the trajectory of the best-response error. By summing the two evaluation errors, the first-order linear terms exactly cancel out:
\begin{align}
    \text{Term II}_i(x) + \text{Term III}_i(x) &\le \mathbb{E}_{\boldsymbol{a} \sim \hat{\pi}} [-\mathcal{Z}_i] + \mathbb{E}_{\boldsymbol{a} \sim \hat{\pi}} [+\mathcal{Z}_i] + \frac{\eta}{2} \left\| Q_i^{\dagger, \hat{\pi}_{-i}}(x, \cdot) - \bar{Q}_i(x, \cdot) \right\|_\infty^2 \nonumber \\
    &= \frac{\eta}{2} \left\| Q_i^{\dagger, \hat{\pi}_{-i}}(x, \cdot) - \bar{Q}_i(x, \cdot) \right\|_\infty^2.
    \label{eq:error_cancellation}
\end{align}
This cancellation removes the standard $\tilde{\mathcal{O}}(1/\sqrt{n})$ statistical bottleneck. The remaining suboptimality is bounded strictly by second-order squared error terms. These residuals are controlled by the strong convexity of the regularized objective, which, under the coverage of Assumption~\ref{ass:concentrability}, yields the fast $\widetilde{\mathcal{O}}(1/n)$ rate previously unestablished in general-sum settings.

\section{Pessimism-Free Coarse Correlated Equilibria}
\label{sec:practical_algorithm}

While GANE establishes optimal statistical limits, Nash equilibrium computation is PPAD-complete \citep{daskalakis2009complexity, chen2009settling}, making the recovery of an NE computationally intractable. This motivates targeting Coarse Correlated Equilibrium (CCE), which serves as a more tractable objective efficiently reachable via independent learning dynamics.

To provide a scalable alternative, we propose General-sum Anchored Mirror Descent (GAMD). GAMD bypasses the need for an exact equilibrium oracle. Instead, for each context evaluated, it simulates a repeated game for $T$ iterations. In each iteration $t \in [T]$, every player independently updates their policy using a KL-regularized Mirror Descent step in response to the marginalized actions of the other players. This iterative sequence converges to a Coarse Correlated Equilibrium (CCE) of the empirical game.

\begin{algorithm}[t]
\caption{General-sum Anchored Mirror Descent (GAMD)}
\label{alg:pro_md}
\begin{algorithmic}[1]
    \State \textbf{Input:} Offline dataset $\mathcal{D}$, reference policies $\pi^{\mathrm{ref}} = (\pi^{\mathrm{ref}}_1, \dots, \pi^{\mathrm{ref}}_m)$, regularization parameter $\eta > 0$, function classes $\{\mathcal{Q}_i\}_{i=1}^m$, number of iterations $T$.
    \For{player $i = 1, \dots, m$}
        \State \textbf{Empirical Q-Update:} Estimate the Q-function via least-squares regression:
        \State $\hat{Q}_i \leftarrow \argmin_{f \in \mathcal{Q}_i} \sum_{\tau=1}^n \left( f(x_{\tau}, \boldsymbol{a}_{\tau}) - r_{\tau,i} \right)^2$.
    \EndFor
    
    \State \textbf{Initialize Policies:} For all $i \in [m]$, initialize $\pi_i^{(1)}(\cdot \mid x) = \pi_i^{\mathrm{ref}}(\cdot \mid x)$ for all $x \in \mathcal{X}$.
    
    \State \textbf{Self-Play Inner Loop:}
    \For{$t = 1, \dots, T$}
        \For{player $i = 1, \dots, m$}
            \State \textbf{Compute Marginalized Q-value:} Evaluate the expected Q-value given opponents' current policies:
            \State $\bar{Q}_i^{(t)}(x, a_i) = \mathbb{E}_{\boldsymbol{a}_{-i} \sim \pi_{-i}^{(t)}(\cdot \mid x)} \big[ \hat{Q}_i(x, a_i, \boldsymbol{a}_{-i}) \big]$.
            \State \textbf{Mirror Descent Update:} Update Player $i$'s policy via the closed-form exponentiated gradient:
         \vspace{-1em}
            \begin{align}
                \pi_i^{(t+1)}(a_i \mid x) \propto \left( \pi_i^{\mathrm{ref}}(a_i \mid x) \right)^{\frac{1}{t}} \left( \pi_i^{(t)}(a_i \mid x) \right)^{\frac{t-1}{t}} \exp\left( \frac{\eta}{t} \bar{Q}_i^{(t)}(x, a_i) \right).\label{eq:md_update_closed_form}
            \end{align}
        \EndFor
    \EndFor
    
    \State \textbf{Time-Averaged Joint Policy:} Define the output policy as the uniform mixture over $T$ iterations:
    \State $\bar{\pi}(\boldsymbol{a} \mid x) = \frac{1}{T} \sum_{t=1}^T \prod_{i=1}^m \pi_i^{(t)}(a_i \mid x)$.
    
    \For{player $i = 1, \dots, m$}
        \State \textbf{Value Update:} Update the empirical value function using the time-averaged policy:
        \State $\hat{V}_i(x) \leftarrow \mathbb{E}_{\boldsymbol{a} \sim \bar{\pi}(\cdot \mid x)} \big[ \hat{Q}_i(x, \boldsymbol{a}) \big] - \eta^{-1} \mathrm{KL}\big(\bar{\pi}_i(\cdot \mid x) \,\|\, \pi^{\mathrm{ref}}_i(\cdot \mid x)\big)$.
    \EndFor
    \State \textbf{Output:} The learned time-averaged joint policy $\bar{\pi}$.
\end{algorithmic}
\end{algorithm}

\subsection{Theoretical Guarantees of GAMD}


To formalize the convergence of GAMD, we first bound the optimization error of the inner loop. Because the KL-divergence penalty transforms the update into a strongly concave objective, applying Mirror Descent yields an accelerated optimization rate. We specialize the standard convergence results of Online Convex Optimization \citep{shalev2025online} to our setting of regularized payoff sequences, where the optimization is anchored to the fixed reference policy $\pi^{\mathrm{ref}}$.     We formalize it in the following lemma (proof provided in Appendix~\ref{app:lemma omd proof}):

\begin{lemma}[OMD External Regret]
\label{lem:omd_external_regret}
Let $f^{(t)}(\pi) = \langle \pi, Q^{(t)} \rangle - \eta^{-1} \mathrm{KL}(\pi \,\|\, \pi^{\mathrm{ref}})$ be a sequence of objectives where $\|Q^{(t)}\|_\infty \le 1$. Applying KL-regularized Online Mirror Descent with the decaying stepsize schedule $\gamma_t = 1/t$ as in Algorithm~\ref{alg:pro_md} yields an average external regret bounded by:
\begin{align}
    \max_{\pi \in \Delta(\mathcal{A})} \frac{1}{T} \sum_{t=1}^T f^{(t)}(\pi) - \frac{1}{T} \sum_{t=1}^T f^{(t)}(\pi^{(t)}) \le \mathcal{O}\left( \frac{\eta \log T}{T} \right).
\end{align}
\end{lemma}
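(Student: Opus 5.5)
The plan is to treat this as online learning with strongly concave losses, and to show that the update \eqref{eq:md_update_closed_form} is exactly \emph{follow-the-regularized-leader} on the cumulative objective. The regret then follows from the standard logarithmic bound for strongly concave losses.

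\textbf{Step 1: FTRL identity.} Write $\lambda = \eta^{-1}$ and $F_t(\pi) = \sum_{s\le t} f^{(s)}(\pi)$. Maximizing $F_t$ over the simplex is a Gibbs variational problem, with solution
\[
\pi^{(t+1)} \propto \pi^{\mathrm{ref}} \exp\Big(\tfrac{\eta}{t}\textstyle\sum_{s\le t} Q^{(s)}\Big).
\]
I would prove by induction that the recursion \eqref{eq:md_update_closed_form} produces exactly this. Assume $\pi^{(t)} \propto \pi^{\mathrm{ref}} \exp(\frac{\eta}{t-1}\sum_{s<t} Q^{(s)})$. Then
\[
(\pi^{\mathrm{ref}})^{1/t}(\pi^{(t)})^{(t-1)/t} \propto \pi^{\mathrm{ref}}\exp\Big(\tfrac{\eta}{t}\textstyle\sum_{s<t} Q^{(s)}\Big).
\]
Multiplying by $\exp(\frac{\eta}{t}Q^{(t)})$ gives the claim for $\pi^{(t+1)}$. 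The base case is $\pi^{(1)} = \pi^{\mathrm{ref}}$, and at $t=1$ the exponent $(t-1)/t$ vanishes. So $\pi^{(t+1)} = \argmax_\pi F_t(\pi)$, which is FTL on the $\lambda$-strongly concave objectives $f^{(t)}$. Strong concavity here is w.r.t.\ $\|\cdot\|_1$, by Pinsker, or equivalently relative to KL.

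\textbf{Step 2: Be-the-leader.} The standard FTL lemma gives
\[
\sum_{t} f^{(t)}(\pi) - \sum_t f^{(t)}(\pi^{(t)}) \le \sum_t \big[f^{(t)}(\pi^{(t+1)}) - f^{(t)}(\pi^{(t)})\big].
\]
This holds for every comparator $\pi$. The $t=1$ term is bounded by $O(1)$ plus a KL term, and I would absorb it, noting that $f^{(1)}$ is maximized at the Gibbs policy and $\pi^{(1)} = \pi^{\mathrm{ref}}$. The stability of consecutive leaders comes from strong concavity of $F_t$ with modulus $t\lambda$. Since $F_{t-1}$ is maximized at $\pi^{(t)}$ and $F_t$ at $\pi^{(t+1)}$, I get
\[
\|\pi^{(t+1)}-\pi^{(t)}\|_1 \lesssim \frac{\|\nabla f^{(t)}\|_\infty}{t\lambda},
\]
and hence $f^{(t)}(\pi^{(t+1)}) - f^{(t)}(\pi^{(t)}) \lesssim \frac{G_t^2}{t\lambda} = \frac{\eta G_t^2}{t}$.

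\textbf{Main obstacle.} The gradient of the KL term, $\lambda(\log(\pi/\pi^{\mathrm{ref}})+1)$, is unbounded near the boundary, so $G_t$ is not a priori $O(1)$. I would work in dual (logit) coordinates instead. Each $\pi^{(t)}$ is a Gibbs policy $\pi^{\mathrm{ref}} e^{\eta \theta_t}$ with $\|\theta_t\|_\infty \le 1$. For Gibbs policies, the linear part of the difference $f^{(t)}(\pi^{(t+1)}) - f^{(t)}(\pi^{(t)})$ is controlled by $\|\theta_{t+1}-\theta_t\|_\infty \le 2/t$. Also, $\log$-ratios between iterates are bounded by $\eta\|\theta_{t+1}-\theta_t\|_\infty$, so the KL part changes by at most $O(\lambda\cdot \eta/t\cdot\eta)$-type quantities. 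Combining these gives per-step terms of order $\eta/t$, using the $1$-smoothness of the log-partition function from Appendix~\ref{app:idealized_stat_bound} for the second-order expansion. Summing $\sum_{t\le T} \eta/t = O(\eta\log T)$ and dividing by $T$ yields the stated $O(\eta\log T/T)$.
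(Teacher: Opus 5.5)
Your proposal is correct. It shares the paper's skeleton: the same induction identifies the update as Follow-The-Leader on $F_t$, and the same Be-the-Leader argument reduces the regret to the stability sum. It differs in how you bound each stability term.

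In the paper's route, your ``obstacle'' does not arise, because the gradient is only ever evaluated at the iterates. There, $\eta^{-1}\log(\pi^{(t)}/\pi^{\mathrm{ref}}) = \theta_t - \eta^{-1}\log Z_{t-1}$, where $\theta_t$ is the running average of $Q^{(1)},\dots,Q^{(t-1)}$. After discarding the $\mathbf{1}$-component, which is orthogonal to $\pi^{(t+1)}-\pi^{(t)}$, the gradient of $f^{(t)}$ at $\pi^{(t)}$ is just $Q^{(t)}-\theta_t$, which is bounded in $\ell_\infty$. The paper then sandwiches the per-step gain:
\begin{itemize}
\item from below by $\tfrac{2t-1}{2\eta}\|\pi^{(t+1)}-\pi^{(t)}\|_1^2$, using strong concavity of $F_{t-1}$ and $F_t$ at their maximizers (via Pinsker);
\item from above by $\|\pi^{(t+1)}-\pi^{(t)}\|_1$, using concavity and H\"older.
\end{itemize}
This gives $2\eta/(2t-1)$ per step.

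You instead bound the step directly in logit space, $\|\pi^{(t+1)}-\pi^{(t)}\|_1\le\eta\|\theta_{t+1}-\theta_t\|_\infty\le 2\eta/t$, using the $\ell_\infty$-to-$\ell_1$ Lipschitzness of softmax (the dual of Pinsker). Your version leans more on the closed form and skips the two-sided strong-concavity argument. The paper's version is the generic FTL template and only needs a bounded projected gradient at the iterates.

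To finish your version cleanly, use the exact identity $f^{(t)}(\pi^{(t+1)})-f^{(t)}(\pi^{(t)}) = \langle\pi^{(t+1)}-\pi^{(t)},Q^{(t)}-\theta_t\rangle-\eta^{-1}\mathrm{KL}(\pi^{(t+1)}\|\pi^{(t)})$ and drop the last term. If you instead bound that KL piece through the sup of log-ratios between iterates, you only get $\mathcal{O}(1/t)$ and lose the factor $\eta$. For the same reason, do not special-case $t=1$ with an $\mathcal{O}(1)$ bound. With $\theta_1=0$, your logit argument already gives $\mathcal{O}(\eta)$ at $t=1$, whereas an $\mathcal{O}(1)$ term would break the stated $\eta$-dependence when $\eta\log T\ll 1$.
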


By combining this $\widetilde{\mathcal{O}}(1/T)$ optimization rate with the statistical error induced by finite offline samples, we establish the final suboptimality guarantee for GAMD. 


\begin{theorem}
\label{thm:gamd_cce_rate}
Under Assumption~\ref{ass:concentrability},  and assuming the reward functions are realizable within $\{\mathcal{Q}_i\}_{i=1}^m$,  the policy $\bar{\pi}$ returned by GAMD satisfies, with probability at least $1-\delta$:
\begin{align}
    \mathrm{Gap}_{\mathrm{CCE}}(\bar{\pi}) \le \underbrace{\widetilde{\mathcal{O}}\left( \frac{m \eta \log T}{T} \right)}_{\text{Optimization Error}} + \underbrace{\widetilde{\mathcal{O}}\left( m \sqrt{ \frac{\Lambda_{\eta, m} C_{\mathrm{uni}} \log (|\mathcal{Q}|/\delta)}{n} } \right)}_{\text{Statistical Correlation Residual}} + \underbrace{\widetilde{\mathcal{O}}\left( \frac{1}{n} \right)}_{\text{Higher-Order Noise}}
\end{align}
Consequently, for $T \ge \sqrt{n}$, GAMD recovers a coarse correlated equilibrium at the standard $\widetilde{\mathcal{O}}(1/\sqrt{n})$ statistical rate.
\end{theorem}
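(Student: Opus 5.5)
We follow the three-term split of Lemma~\ref{lem:gap_decomp_main}, applied now to the correlated output $\bar{\pi}$. We sum over players and take $\mathbb{E}_{x\sim\rho}$.

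\textbf{Term I (optimization gap).} For each $x$ and $i$, define $f_i^{(t)}(\pi_i)=\langle \pi_i,\bar Q_i^{(t)}(x,\cdot)\rangle-\eta^{-1}\mathrm{KL}(\pi_i\|\pi_i^{\mathrm{ref}})$. These are exactly the objectives fed to player $i$'s mirror-descent update \eqref{eq:md_update_closed_form}, and $\|\bar Q_i^{(t)}\|_\infty\le 1$, so Lemma~\ref{lem:omd_external_regret} applies.
\begin{itemize}
\item Since $\bar\pi$ is a uniform mixture of the products $\pi^{(t)}$, the empirical best-response value against $\bar\pi_{-i}$ is $\max_{\pi_i}\frac1T\sum_t f_i^{(t)}(\pi_i)$. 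This uses the fact that the linear part averages over $t$ while the KL term does not depend on $t$.
\item For the on-policy value, convexity of KL gives $\mathrm{KL}(\bar\pi_i\|\pi_i^{\mathrm{ref}})\le\frac1T\sum_t\mathrm{KL}(\pi_i^{(t)}\|\pi_i^{\mathrm{ref}})$. Hence $\hat V_i^{\bar\pi}(x)\ge\frac1T\sum_t f_i^{(t)}(\pi_i^{(t)})$.
\end{itemize}
Therefore Term I is at most the average external regret, which is $\mathcal{O}(\eta\log T/T)$. Summing over players gives the $m\eta\log T/T$ term.

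\textbf{Terms II and III (statistical error).} Term III equals $\mathbb{E}_{\boldsymbol a\sim\bar\pi}[\mathcal Z_i]$, minus the difference between the KL term of $\bar\pi_i$ under $V$ and under $\hat V$. That difference is zero, since both use the same $\bar\pi_i$.

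For Term II, we reuse the 1-smoothness bound of the log-partition function from the GANE analysis. Let $\hat\pi_i^\dagger\propto\pi_i^{\mathrm{ref}}\exp(\eta\bar Q_i)$ be the empirical best response to $\bar\pi_{-i}$, where $\bar Q_i$ is $\hat Q_i$ marginalized over $\bar\pi_{-i}$. Then
\[
\text{Term II}_i\le\mathbb{E}_{\hat\pi_i^\dagger\times\bar\pi_{-i}}[-\mathcal Z_i]+\frac{\eta}{2}\|\mathbb{E}_{\bar\pi_{-i}}\mathcal Z_i(x,\cdot,\boldsymbol a_{-i})\|_\infty^2 .
\]
Unlike the NE case, the first-order terms do not cancel. $\bar\pi$ is correlated and $\bar\pi_i\ne\hat\pi_i^\dagger$, so Term II is evaluated under $\hat\pi_i^\dagger\times\bar\pi_{-i}$ while Term III is evaluated under $\bar\pi$. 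This residual is the source of the $1/\sqrt n$ rate. We bound each first-order term by Cauchy--Schwarz:
\[
|\mathbb{E}_{\nu}[\mathcal Z_i]|\le\sqrt{\|\rho\nu/\mu\|_\infty\,\mathbb{E}_\mu[\mathcal Z_i^2]} .
\]
Least-squares realizability with a union bound over the finite classes gives $\mathbb{E}_\mu[\mathcal Z_i^2]\lesssim\log(m|\mathcal Q|/\delta)/n$.

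\textbf{Main obstacle: the density-ratio bound.} The main step is showing that $\nu\in\{\bar\pi,\ \hat\pi_i^\dagger\times\bar\pi_{-i}\}$ satisfies $\|\rho\nu/\mu\|_\infty\le\Lambda_{\eta,m}C_{\mathrm{uni}}$, using only Assumption~\ref{ass:concentrability}. The plan is a multiplicative sandwich.
\begin{itemize}
\item By induction on the update \eqref{eq:md_update_closed_form}, every iterate has the form $\pi_j^{(t)}\propto\pi_j^{\mathrm{ref}}\exp(\eta g_j^{(t)})$, where $g_j^{(t)}$ is a convex combination of the $\bar Q_j^{(s)}$ and so lies in $[0,1]$. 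Hence $\pi_j^{(t)}/\pi_j^{\mathrm{ref}}\le e^{\eta}$.
\item The same ratio bound holds for $\hat\pi_i^\dagger$ and for the averaged marginals.
\item For the joint policy, each mixture component is a product, so
\[
\frac{\nu(\boldsymbol a|x)}{\pi_i'(a_i|x)\,\pi_{-i}^{\mathrm{ref}}(\boldsymbol a_{-i}|x)}\le e^{\eta(m-1)}\le\Lambda_{\eta,m},
\]
with $\pi_i'=\nu_i$. The comparison policy $\pi_i'\times\pi_{-i}^{\mathrm{ref}}$ lies in $\Pi_{\mathrm{ref-uni}}$, so Assumption~\ref{ass:concentrability} applies.
\end{itemize}
The care needed is in checking this for the correlated mixture: it holds because each component is a product, and the maximum over $t$ is bounded by the same constant.

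\textbf{Assembling the bound.} The first-order terms contribute $m\sqrt{\Lambda_{\eta,m}C_{\mathrm{uni}}\log(|\mathcal Q|/\delta)/n}$. For the quadratic term, we bound $\|\mathbb{E}_{\bar\pi_{-i}}\mathcal Z_i(x,\cdot,\boldsymbol a_{-i})\|_\infty^2$ by $\sum_{a_i}(\cdot)^2$, apply Jensen, and take $\mathbb{E}_x$. Each $a_i$ is covered by a deterministic deviation in $\Pi_{\mathrm{ref-uni}}$, so the same change of measure applies and this term is $\widetilde{\mathcal O}(1/n)$. Adding the three pieces gives the stated bound, and taking $T\ge\sqrt n$ makes the optimization term $\widetilde{\mathcal O}(1/\sqrt n)$.
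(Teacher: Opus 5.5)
Your proposal is correct and follows essentially the same route as the paper. The shared steps are: the three-term decomposition, reducing Term I to averaged external regret via linearity in $\bar\pi_{-i}$ and convexity of KL, the 1-smoothness bound for Term II, Cauchy--Schwarz on the non-cancelling first-order terms, and a change of measure into $\Pi_{\mathrm{ref-uni}}$ using the $e^{\eta}$ Gibbs-ratio bound on each iterate.

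The differences are minor. For the on-policy term you compare $\bar\pi$ to $\bar\pi_i\times\pi_{-i}^{\mathrm{ref}}$ (ratio $e^{\eta(m-1)}$), whereas the paper compares to $\pi^{\mathrm{ref}}$ (ratio $\Lambda_{\eta,m}$). Your bound $\|\cdot\|_\infty^2\le\sum_{a_i}(\cdot)^2$ on the quadratic term adds an $|\mathcal{A}_i|$ factor inside the $\widetilde{\mathcal{O}}(1/n)$ term. The paper avoids that factor by using the greedy deterministic deviation $\tilde\pi_i(x)=\argmax_{a_i}\mathbb{E}_{\boldsymbol{a}_{-i}\sim\pi_{-i}^{\mathrm{ref}}}[\mathcal{Z}_i^2]$.
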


\subsection{Proof Sketch of Theorem~\ref{thm:gamd_cce_rate}}

Using the decomposition from Lemma~\ref{lem:gap_decomp_main}, Term I corresponds to the average external regret of the mirror descent updates across $T$ iterations. By standard Online Convex Optimization, this optimization error decays at $\widetilde{\mathcal{O}}(m \eta / T)$ (Lemma~\ref{lem:omd_external_regret}).

For the statistical errors (Terms II and III), GAMD faces a fundamental structural barrier. The recovered time-averaged policy $\bar{\pi}$ is a \textit{correlated} joint mixture, meaning the independent product distribution differs from the actual joint policy: $\bar{\pi}_i^\dagger \times \bar{\pi}_{-i} \neq \bar{\pi}$. Consequently, the linear regression errors $-\mathcal{Z}_i$ (from the best response) and $+\mathcal{Z}_i$ (from on-policy evaluation) are integrated over mismatched distributions and do not algebraically cancel:
\begin{align}
    \text{Term II}_i(x) + \text{Term III}_i(x) \approx \mathbb{E}_{\boldsymbol{a} \sim (\bar{\pi}_i^\dagger, \bar{\pi}_{-i})} [ -\mathcal{Z}_i(x, \boldsymbol{a}) ] + \mathbb{E}_{\boldsymbol{a} \sim \bar{\pi}} [ +\mathcal{Z}_i(x, \boldsymbol{a}) ] \neq 0.
\end{align}

Without cancellation, we bound the absolute magnitude of these linear expectations using Cauchy-Schwarz, $\mathbb{E}[|\mathcal{Z}_i|] \le \sqrt{\mathbb{E}[\mathcal{Z}_i^2]}$. This relates the linear terms back to the $\widetilde{\mathcal{O}}(1/n)$ in-sample squared regression error, yielding an $\tilde{\mathcal{O}}(1/\sqrt{n})$ bound. Thus, the inherent correlation of the CCE forces the statistical evaluation error to dominate at the standard minimax rate of $\widetilde{\mathcal{O}}(1/\sqrt{n})$. Setting $T \ge \sqrt{n}$ balances the optimization and statistical errors. The complete proof is in Appendix~\ref{app:proof_pro_md_cce}.

\section{Related Work}
\label{sec:related_work}

\paragraph{Pessimism in Offline Reinforcement Learning.}
Offline reinforcement learning (RL) focuses on policy optimization from pre-collected datasets without the benefit of interactive exploration. The primary theoretical obstacle in this regime is distribution shift, wherein the target policy's visitation frequency deviates from the logged behavior, inducing severe overestimation and extrapolation error \citep{levine2020offline, liu2024doubly, liu2024efficient, chen2025efficient, liu2025efficient}. To counteract this bias, researchers utilize the principle of \textit{pessimism}, which provides a robust framework for establishing sample-efficient guarantees by penalizing values in poorly explored state-action regions \citep{liu2020provably, rashidinejad2021bridging, jin2021pessimism, xie2021bellman, uehara2021pessimistic, zhan2022offline}. These conservative strategies, often implemented via lower confidence bounds or conservative value iterations, have been proven minimax optimal under various single-policy concentrability settings \citep{li2024settling}. 

In game-theoretic environments, the complexity of this shift is intensified by strategic interactions, requiring sufficient data coverage of unilateral deviations for each agent to ensure successful equilibrium recovery \citep{cui2022offline}. Consequently, the current literature on offline Markov games relies heavily on these pessimistic foundations to mitigate information gaps within the static logs \citep{cui2022offline, zhong2022pessimistic, zhang2023offline}. In contrast, our approach establishes that KL regularization suffices to stabilize learning without relying on such explicit pessimistic mechanisms.

\paragraph{Equilibrium Recovery in Multi-Agent Games.}
The transition from single-agent RL to multi-agent environments necessitates a shift from value optimality to equilibrium stability, targeting solution concepts such as Nash Equilibrium and Coarse Correlated Equilibrium (CCE) \citep{roughgarden2016twenty, cui2022provably, cui2022offline}. To recover these equilibria from fixed datasets, existing methodologies largely adapt the principle of pessimism to the game-theoretic setting, identifying minimax-optimal strategies through conservative value iteration and strategy-wise uncertainty bonuses \citep{zhong2022pessimistic, cui2022provably, zhang2023offline, yan2024model}. 

The theoretical validity of these pessimistic approaches is grounded in the notion of \textit{unilateral concentrability}—the requirement that the offline logs sufficiently cover any individual agent's deviations from a target equilibrium \citep{cui2022provably}. However, anchoring this coverage requirement to an unknown optimal equilibrium $\pi^*$ introduces a fundamental conceptual circularity, as the very policy required to define data sufficiency is the target of the learning process itself. Our work addresses this by anchoring the learning objective to a fixed and known reference policy $\pi^{\mathrm{ref}}$ \citep{chen2026fast}, transforming the coverage requirement into a more interpretable condition that remains grounded in the support of the offline data.

\paragraph{KL Regularization and Behavioral Anchoring.}
KL regularization relative to a fixed reference policy has emerged as a fundamental tool for enforcing behavioral constraints and incorporating prior knowledge into reinforcement learning frameworks~\citep{xiong2023iterative, munos2024nash}. By penalizing deviations from an anchor distribution, such as a pre-trained model or human demonstrations, this mechanism effectively mitigates policy drift and stabilizes the resulting learning dynamics~\citep{ye2024online, nayak2025achieving}. The most prominent application of this principle is found in the alignment of large language models through Reinforcement Learning from Human Feedback (RLHF), where a KL divergence penalty ensures the optimized model remains within the trusted region of the initial distribution~\citep{ouyang2022training, rafailov2023direct}. 

While regularized objectives have been extensively analyzed across both single-agent and multi-agent regimes~\citep{xie2024exploratory, zhao2025logarithmic, zhang2025improving, zhang2025iterative}, current theoretical results typically suggest that regularization primarily serves as a numerical stabilizer, yielding sample complexity rates that mirror those of their unregularized counterparts~\citep{ye2024online}. Consequently, the capacity of KL regularization to function as a standalone, implicit pessimism mechanism  in offline general-sum games has remained unestablished. Our work bridges this gap by demonstrating that regularized KL anchoring suffices for stable equilibrium recovery without explicit bonuses.

\paragraph{Statistical Efficiency and Fast Rates.}
Most offline learning guarantees characterize the standard $\widetilde{\mathcal{O}}(1/\sqrt{n})$ statistical rate~\citep{jin2021pessimism, shi2022pessimistic}. While recent investigations demonstrated that an accelerated $\widetilde{\mathcal{O}}(1/n)$ rate is achievable in two-player zero-sum games \citep{zhang2026beyond, chen2026offline}, those results rely on the minimax structure and skew-symmetry inherent to strictly competitive dynamics. In contrast, general-sum games are fundamentally more complex; whereas finding an equilibrium is already PPAD-complete in the two-player general-sum case~\citep{chen2009settling}, the multi-player setting remains equally intractable while introducing more complex strategic interactions~\citep{papadimitriou1994complexity, daskalakis2009complexity}. 

We broaden this theoretical landscape in two directions. First, we establish that the fast $\widetilde{\mathcal{O}}(1/n)$ rate persists for Nash equilibria in general-sum games, proving that independence—rather than zero-sum symmetry—is the key driver of this acceleration. Second, for the computationally tractable CCE, we provide a pessimism-free framework that achieves the standard $\widetilde{\mathcal{O}}(1/\sqrt{n})$ rate. Together, these findings establish a tractable foundation for pessimism-free learning in multi-player general-sum games.

\section{Discussion}
\label{sec:discussion}

While our work establishes KL regularization as a standalone, pessimism-free mechanism for equilibrium learning in general-sum games, several promising directions for future research remain. First, extending this framework to multi-step Markov games is essential to address the accumulation of estimation errors across temporal horizons. Second, adapting our regularized dynamics to partially observable environments or extensive-form games would broaden applicability to real-world systems involving hidden states and asymmetric information. Finally, enhancing robustness against misspecified reference policies—potentially via adaptive regularization schedules—is a key priority for maintaining reliability in highly non-stationary or adversarial strategic settings.

\section{Conclusion}
\label{sec:conclusion}
In this work, we have demonstrated that KL regularization serves as a robust and standalone mechanism for stabilizing offline multi-agent learning, effectively mitigating distribution shift without the need for explicit pessimistic penalties. By shifting the coverage requirement from unknown optimal policies to a fixed and known reference anchor, we established a theoretical framework that bypasses the conceptual circularity inherent in prior literature. We proposed GANE, which achieves an accelerated $\widetilde{\mathcal{O}}(1/n)$ statistical rate for Nash equilibria by leveraging the product structure of independent policies. Furthermore, we provided GAMD, a decentralized iterative framework that recovers Coarse Correlated Equilibria at the standard $\widetilde{\mathcal{O}}(1/\sqrt{n})$ rate without explicit pessimistic bonuses. Together, these results establish KL regularization as a tractable, pessimism-free alternative for establishing optimal statistical rates in offline game-theoretic learning.

\section*{Acknowledgment}
Yuheng Zhang is supported by a fellowship from the Amazon-Illinois Center on AI for Interactive Conversational Experiences (AICE). Nan Jiang acknowledges funding support from NSF CNS-2112471, NSF CAREER IIS-2141781, and Sloan Fellowship.

%

\bibliographystyle{apalike}
\bibliography{bibliography}

\clearpage
\appendix

\section{Proof}
\subsection{Proof of Theorem~\ref{thm:fast_rate_rone}}
\label{app:idealized_stat_bound}

\subsubsection{Decomposition of the Total Exploitability Gap}
\label{app:gap_decomposition}

To bound the suboptimality of the learned joint policy $\hat{\pi}$, we decompose the Nash Equilibrium gap into an empirical optimization error and statistical evaluation errors by pivoting through the estimated value functions $\hat{V}$ and $\hat{Q}$.

Recall the definition of the NE gap (Total Exploitability) for the learned policy $\hat{\pi}$:
\begin{equation}
    \mathrm{Gap}_{\mathrm{NE}}(\hat{\pi}) = \sum_{i=1}^m \mathbb{E}_{x \sim \rho} \left[ V_i^{\dagger, \hat{\pi}_{-i}}(x) - V_i^{\hat{\pi}}(x) \right].
\end{equation}

We define the estimated best-response value for Player $i$ under the empirical game evaluated by $\hat{Q}$ as:
\begin{equation}
    \hat{V}_i^{\dagger, \hat{\pi}_{-i}}(x) := \max_{\pi'_i} \hat{V}_i^{\pi'_i, \hat{\pi}_{-i}}(x).
\end{equation}

Here we restate Lemma~\ref{lem:gap_decomp_main}:
\reOOgapdecomposition*


\paragraph{Bounding the Decomposition Terms:}
This decomposition isolates the distinct sources of error, allowing us to bound the gap systematically:
\begin{itemize}
    \item \textbf{Term I} represents the empirical external regret of Player $i$ operating strictly on the estimated game $\hat{Q}$. Because the joint policy $\hat{\pi}$ is obtained via an oracle exactly solving the empirical games, this term strictly corresponds to the optimization error and vanishes identically.
    \item \textbf{Term II} and \textbf{Term III} represent the statistical evaluation errors. By applying algebraic expansions, these terms can be unrolled into the sum of regression errors over the contextual distributions, respectively. These errors are precisely controlled by our Unilateral Concentrability assumption and converted into squared errors via the KL-divergence strong convexity.
\end{itemize}

\subsubsection{Bounding the Statistical Error for the Idealized Algorithm}
Let $\hat{\pi}$ be the joint policy returned by the idealized GANE algorithm (Algorithm \ref{alg:ideal_GANE}). Because $\hat{\pi}$ is an exact regularized Nash Equilibrium of the estimated game $\hat{Q}$, no player can unilaterally improve their empirical value. Thus, for all $i \in [m]$, the empirical optimization gap vanishes:
\begin{align}
    \text{Term I}_i = \hat{V}_i^{\dagger, \hat{\pi}_{-i}}(x) - \hat{V}_i^{\hat{\pi}}(x) = 0.
\end{align}
Therefore, the total unilateral exploitability gap is bounded strictly by the statistical evaluation errors:
\begin{align}
    \mathrm{Gap}_{\mathrm{NE}}(\hat{\pi}) \le \sum_{i=1}^m \left( \text{Term II}_i + \text{Term III}_i \right),
\end{align}
where $\text{Term II}_i = V_i^{\dagger, \hat{\pi}_{-i}}(x) - \hat{V}_i^{\dagger, \hat{\pi}_{-i}}(x)$ and $\text{Term III}_i = \hat{V}_i^{\hat{\pi}}(x) - V_i^{\hat{\pi}}(x)$.

Since $\hat{\pi}$ is an exact NE on the estimated values, the marginalized policy $\hat{\pi}_i$ is the empirical best response to $\hat{\pi}_{-i}$, implying $\hat{V}_i^{\dagger, \hat{\pi}_{-i}} = \hat{V}_i^{\hat{\pi}}$. We analyze the sum of these evaluation errors by first bounding the local value difference of the best response.

\paragraph{Step 1: 1-Smoothness of the Log-Partition Function.}
Fix a context $x$. We define the value difference of the best response as $\Delta_i(x) \coloneqq V_i^{\dagger, \hat{\pi}_{-i}}(x) - \hat{V}_i^{\dagger, \hat{\pi}_{-i}}(x)$.
For notational clarity, we define the marginalized empirical Q-function as $\bar{Q}_i(x, a_i) \coloneqq \mathbb{E}_{\boldsymbol{a}_{-i} \sim \hat{\pi}_{-i}}[\hat{Q}_i(x, a_i, \boldsymbol{a}_{-i})]$ and the true best-response Q-value as $Q_i^{\dagger, \hat{\pi}_{-i}}(x, a_i) \coloneqq \mathbb{E}_{\boldsymbol{a}_{-i} \sim \hat{\pi}_{-i}}[r_i^\star(x, \boldsymbol{a})]$.
Both the true best response and the empirical best response values can be expressed exactly via the log-partition function $\Phi(\theta) = \log \sum_a \exp(\theta_a)$. 
We define the natural parameters (logits) for the true and empirical best responses respectively:
\begin{align}
    \theta_i^{\dagger}(x,\cdot) &\coloneqq \eta Q_i^{\dagger, \hat{\pi}_{-i}}(x, \cdot) + \log \pi_{\mathrm{ref}, i}(\cdot \mid x), \\
    \hat{\theta}_i(x,\cdot) &\coloneqq \eta \bar{Q}_i(x, \cdot) + \log \pi_{\mathrm{ref}, i}(\cdot \mid x).
\end{align}
The values are exactly $V_i^{\dagger}(x) = \eta^{-1} \Phi(\theta_i^{\dagger})$ and $\hat{V}_i^{\dagger}(x) = \eta^{-1} \Phi(\hat{\theta}_i)$.
Because the log-partition function $\Phi$ is 1-smooth with respect to the $\ell_\infty$-norm, its Bregman divergence (which corresponds to the KL divergence) is strictly bounded by the squared $\ell_\infty$ distance:
\begin{align}
    \Phi(\theta^\dagger) - \Phi(\hat{\theta}) \le \langle \nabla \Phi(\hat{\theta}), \theta^\dagger - \hat{\theta} \rangle + \frac{1}{2} \|\theta^\dagger - \hat{\theta}\|_\infty^2.
\end{align}
Crucially, the gradient map $\nabla \Phi(\hat{\theta})$ yields exactly the probabilities of the empirical best response, which is the learned policy $\hat{\pi}_i$. Dividing by $\eta$, we obtain the local value difference bound:
\begin{align}
    \Delta_i(x) \le \mathbb{E}_{a_i \sim \hat{\pi}_i(\cdot|x)} \left[ Q_i^{\dagger, \hat{\pi}_{-i}}(x, a_i) - \bar{Q}_i(x, a_i) \right] + \frac{\eta}{2} \left\| Q_i^{\dagger, \hat{\pi}_{-i}}(x, \cdot) - \bar{Q}_i(x, \cdot) \right\|_\infty^2.
    \label{Delta 1 smooth Q}
\end{align}
\begin{proof}
By definition, the local value difference is given by the scaled difference of the log-partition functions:
\begin{align}
    \Delta_i(x) 
    &= V_i^{\dagger, \hat{\pi}_{-i}}(x) - \hat{V}_i^{\dagger, \hat{\pi}_{-i}}(x) \nonumber \\
    &= \eta^{-1} \Phi(\theta_i^{\dagger}(x, \cdot)) - \eta^{-1} \Phi(\hat{\theta}_i(x, \cdot)) \nonumber \\
    &= \eta^{-1} \Big( \Phi(\theta_i^{\dagger}) - \Phi(\hat{\theta}_i) \Big).
\end{align}

Applying the 1-smoothness property of $\Phi$ to the term in the parentheses:
\begin{align}
    \Delta_i(x) 
    &\le \eta^{-1} \left[ \langle \nabla \Phi(\hat{\theta}_i), \theta_i^{\dagger} - \hat{\theta}_i \rangle + \frac{1}{2} \|\theta_i^{\dagger} - \hat{\theta}_i\|_\infty^2 \right].
    \label{eq:smoothness_subbed}
\end{align}

We now evaluate the components of this upper bound. First, the gradient of the log-partition function $\nabla \Phi(\theta)$ yields the Softmax probabilities. Since $\hat{\theta}_i$ represents the logits of the empirical game, its Softmax is exactly the learned empirical policy:
\begin{align}
    \nabla \Phi(\hat{\theta}_i)_{a_i} = \hat{\pi}_i(a_i|x).
\end{align}

Second, we evaluate the difference in logits. By definition, $\theta_i^{\dagger}(x,a_i) = \eta Q_i^{\dagger, \hat{\pi}_{-i}}(x, a_i) + \log \pi_{\mathrm{ref}, i}(a_i \mid x)$, and similarly for $\hat{\theta}_i$. When we subtract them, the reference policy perfectly cancels out:
\begin{align}
    \theta_i^{\dagger}(x, a_i) - \hat{\theta}_i(x, a_i) 
    &= \Big( \eta Q_i^{\dagger, \hat{\pi}_{-i}}(x, a_i) + \log \pi_{\mathrm{ref}, i}(a_i|x) \Big) - \Big( \eta \bar{Q}_i(x, a_i) + \log \pi_{\mathrm{ref}, i}(a_i|x) \Big) \nonumber \\
    &= \eta \Big( Q_i^{\dagger, \hat{\pi}_{-i}}(x, a_i) - \bar{Q}_i(x, a_i) \Big).
\end{align}

Substituting these evaluated components back into the inner product term of Equation \eqref{eq:smoothness_subbed}:
\begin{align}
    \langle \nabla \Phi(\hat{\theta}_i), \theta_i^{\dagger} - \hat{\theta}_i \rangle 
    &= \sum_{a_i \in \mathcal{A}_i} \hat{\pi}_i(a_i|x) \cdot \eta \Big( Q_i^{\dagger, \hat{\pi}_{-i}}(x, a_i) - \bar{Q}_i(x, a_i) \Big) \nonumber \\
    &= \eta \mathbb{E}_{a_i \sim \hat{\pi}_i(\cdot|x)} \left[ Q_i^{\dagger, \hat{\pi}_{-i}}(x, a_i) - \bar{Q}_i(x, a_i) \right].
\end{align}

Next, we substitute the logit difference into the squared $\ell_\infty$ norm term:
\begin{align}
    \frac{1}{2} \|\theta_i^{\dagger} - \hat{\theta}_i\|_\infty^2 
    &= \frac{1}{2} \left\| \eta \Big( Q_i^{\dagger, \hat{\pi}_{-i}}(x, \cdot) - \bar{Q}_i(x, \cdot) \Big) \right\|_\infty^2 \nonumber \\
    &= \frac{\eta^2}{2} \left\| Q_i^{\dagger, \hat{\pi}_{-i}}(x, \cdot) - \bar{Q}_i(x, \cdot) \right\|_\infty^2.
\end{align}

Finally, plugging these two terms back into Equation \eqref{eq:smoothness_subbed} and multiplying through by $\eta^{-1}$:
\begin{align}
    \Delta_i(x) 
    &\le \eta^{-1} \Bigg[ \eta \mathbb{E}_{a_i \sim \hat{\pi}_i(\cdot|x)} \left[ Q_i^{\dagger, \hat{\pi}_{-i}}(x, a_i) - \bar{Q}_i(x, a_i) \right] + \frac{\eta^2}{2} \left\| Q_i^{\dagger, \hat{\pi}_{-i}}(x, \cdot) - \bar{Q}_i(x, \cdot) \right\|_\infty^2 \Bigg] \nonumber \\
    &= \mathbb{E}_{a_i \sim \hat{\pi}_i(\cdot|x)} \left[ Q_i^{\dagger, \hat{\pi}_{-i}}(x, a_i) - \bar{Q}_i(x, a_i) \right] + \frac{\eta}{2} \left\| Q_i^{\dagger, \hat{\pi}_{-i}}(x, \cdot) - \bar{Q}_i(x, \cdot) \right\|_\infty^2.
\end{align}
\end{proof}

\paragraph{Step 2: Evaluating Term II via the Pointwise Error Identity.}
We introduce the regression error for Player $i$:
\begin{align}
    \mathcal{Z}_i(x, \boldsymbol{a}) \coloneqq \hat{Q}_i(x, \boldsymbol{a}) - r_i^\star(x, \boldsymbol{a}).
\end{align}
The Q-value difference expands algebraically as:
\begin{align}
    Q_i^{\dagger, \hat{\pi}_{-i}}(x, a_i) - \bar{Q}_i(x, a_i) = \mathbb{E}_{\boldsymbol{a}_{-i} \sim \hat{\pi}_{-i}} \left[ -\mathcal{Z}_i(x, \boldsymbol{a}) \right].
\label{Q difference in Z}
\end{align}
To see \eqref{Q difference in Z}, recall the definitions for the true best-response Q-value, the estimated Q-value, and the regression error for Player $i$:
\begin{align}
    Q_i^{\dagger, \hat{\pi}_{-i}}(x, a_i) &= \mathbb{E}_{\boldsymbol{a}_{-i} \sim \hat{\pi}_{-i}} \left[ r_i^\star(x, \boldsymbol{a}) \right], \\
    \bar{Q}_i(x, a_i) &= \mathbb{E}_{\boldsymbol{a}_{-i} \sim \hat{\pi}_{-i}} \big[ \hat{Q}_i(x, \boldsymbol{a}) \big], \\
    \mathcal{Z}_i(x, \boldsymbol{a}) &\coloneqq \hat{Q}_i(x, \boldsymbol{a}) - r_i^\star(x, \boldsymbol{a}).
\end{align}

By expanding the Q-value difference and applying direct substitution, we obtain:
\begin{align}
    &Q_i^{\dagger, \hat{\pi}_{-i}}(x, a_i) - \bar{Q}_i(x, a_i) \nonumber \\
    &= \mathbb{E}_{\boldsymbol{a}_{-i} \sim \hat{\pi}_{-i}} \left[ r_i^\star(x, \boldsymbol{a}) \right] - \mathbb{E}_{\boldsymbol{a}_{-i} \sim \hat{\pi}_{-i}} \big[ \hat{Q}_i(x, \boldsymbol{a}) \big] \nonumber \\
    &= \mathbb{E}_{\boldsymbol{a}_{-i} \sim \hat{\pi}_{-i}} \left[ r_i^\star(x, \boldsymbol{a}) - \hat{Q}_i(x, \boldsymbol{a}) \right] \nonumber \\
    \intertext{Grouping the terms strictly isolates the exact definition of $\mathcal{Z}_i$:}
    &= \mathbb{E}_{\boldsymbol{a}_{-i} \sim \hat{\pi}_{-i}} \left[ -\big(\hat{Q}_i(x, \boldsymbol{a}) - r_i^\star(x, \boldsymbol{a})\big) \right] \nonumber \\
    &= \mathbb{E}_{\boldsymbol{a}_{-i} \sim \hat{\pi}_{-i}} \left[ -\mathcal{Z}_i(x, \boldsymbol{a}) \right].
    \label{Q difference in Delta recursive}
\end{align}

Recall that the Total Exploitability of the game is evaluated in expectation over the context distribution $x \sim \rho$. Therefore, to bound the total gap, we evaluate the expected value of our decomposition terms. 

For Player $i$, we define the pointwise value difference of the best response as $\Delta_i(x) \coloneqq V_i^{\dagger, \hat{\pi}_{-i}}(x) - \hat{V}_i^{\dagger, \hat{\pi}_{-i}}(x)$. 
By direct algebraic substitution, the linear Q-value difference is strictly equal to the negative regression error:
\begin{align}
    Q_i^{\dagger, \hat{\pi}_{-i}}(x, a_i) - \bar{Q}_i(x, a_i) = \mathbb{E}_{\boldsymbol{a}_{-i} \sim \hat{\pi}_{-i}} \left[ -\mathcal{Z}_i(x, \boldsymbol{a}) \right].
\end{align}
Taking the expectation over the context distribution $x \sim \rho$, Term II is bounded by:
\begin{align}
    \mathbb{E}_{x \sim \rho}[\text{Term II}_i(x)] \le \mathbb{E}_{x \sim \rho, \boldsymbol{a} \sim (\hat{\pi}_i^\dagger, \hat{\pi}_{-i})} \left[ -\mathcal{Z}_i(x, \boldsymbol{a}) \right] + \frac{\eta}{2} \mathbb{E}_{x \sim \rho} \left[ \left\| Q_i^{\dagger, \hat{\pi}_{-i}}(x, \cdot) - \bar{Q}_i(x, \cdot) \right\|_\infty^2 \right].
\end{align}

\paragraph{Step 3: Evaluating Term III via Pointwise Error Identity.}
Next, we apply the similar algebraic derivation on Term III.
\begin{align}
    \mathbb{E}_{x \sim \rho}[\text{Term III}_i(x)] &= \mathbb{E}_{x \sim \rho} \left[ \hat{V}_i^{\hat{\pi}}(x) - V_i^{\hat{\pi}}(x) \right] \nonumber \\
    \intertext{We define the local joint policy evaluation error as $\delta_i(x) \coloneqq \hat{V}_i^{\hat{\pi}}(x) - V_i^{\hat{\pi}}(x)$. Expanding both value functions according to their definitions, the KL regularization terms exactly cancel out because both are evaluated under the identical policy $\hat{\pi}$:}
    \delta_i(x) &= \left( \mathbb{E}_{\boldsymbol{a} \sim \hat{\pi}(\cdot|x)} \big[ \hat{Q}_i(x, \boldsymbol{a}) \big] - \text{Reg}_i(\hat{\pi}) \right) \nonumber \\
    &\quad - \left( \mathbb{E}_{\boldsymbol{a} \sim \hat{\pi}(\cdot|x)} \Big[ r_i^\star(x, \boldsymbol{a}) \Big] - \text{Reg}_i(\hat{\pi}) \right) \nonumber \\
    &= \mathbb{E}_{\boldsymbol{a} \sim \hat{\pi}(\cdot|x)} \left[ \hat{Q}_i(x, \boldsymbol{a}) - r_i^\star(x, \boldsymbol{a}) \right] \nonumber \\
    \intertext{Grouping the terms recovers the exact definition of the regression error $\mathcal{Z}_i$:}
    &= \mathbb{E}_{\boldsymbol{a} \sim \hat{\pi}(\cdot|x)} \Bigg[ \underbrace{\hat{Q}_i(x, \boldsymbol{a}) - r_i^\star(x, \boldsymbol{a})}_{= \mathcal{Z}_i(x, \boldsymbol{a})} \Bigg] \nonumber \\
    &= \mathbb{E}_{\boldsymbol{a} \sim \hat{\pi}(\cdot|x)} \left[ \mathcal{Z}_i(x, \boldsymbol{a}) \right] \nonumber \\
    \intertext{We evaluate this identity in expectation over the context distribution $x \sim \rho$:}
    \mathbb{E}_{x \sim \rho} \left[ \delta_i(x) \right] &= \mathbb{E}_{x \sim \rho} \Bigg[ \mathbb{E}_{\boldsymbol{a} \sim \hat{\pi}(\cdot|x)} \Big[ \mathcal{Z}_i(x, \boldsymbol{a}) \Big] \Bigg] \nonumber \\
    &= \mathbb{E}_{x \sim \rho,\, \boldsymbol{a} \sim \hat{\pi}(\cdot|x)} \left[ \mathcal{Z}_i(x, \boldsymbol{a}) \right].
\end{align}
Thus, taking the expectation over the context distribution, we obtain:
\begin{align}
    \mathbb{E}_{x \sim \rho}[\text{Term III}_i(x)] = \mathbb{E}_{x \sim \rho}\left[ \hat{V}_i^{\hat{\pi}}(x) - V_i^{\hat{\pi}}(x) \right] = \mathbb{E}_{x \sim \rho,\, \boldsymbol{a} \sim \hat{\pi}(\cdot|x)} \left[ \mathcal{Z}_i(x, \boldsymbol{a}) \right].
\end{align}

When adding the expected Term II and Term III, the linear evaluation errors $-\mathcal{Z}_i$ and $+\mathcal{Z}_i$ completely cancel out. This leaves the total expected gap bounded strictly by the squared errors:
\begin{align}
    \mathbb{E}_{x \sim \rho} \big[ \text{Term II}_i(x) + \text{Term III}_i(x) \big] \le \frac{\eta}{2} \mathbb{E}_{x \sim \rho} \left[ \left\| Q_i^{\dagger, \hat{\pi}_{-i}}(x, \cdot) - \bar{Q}_i(x, \cdot) \right\|_\infty^2 \right].
\label{Term 23 in Q difference}
\end{align}

\paragraph{Step 4: Unrolling the Linear Error and Cauchy-Schwarz.}
Taking the absolute value of the Q-value difference identity \eqref{Q difference in Delta recursive} and applying the triangle inequality yields the pointwise linear bound:
\begin{align}
    \left\| Q_i^{\dagger, \hat{\pi}_{-i}}(x, \cdot) - \bar{Q}_i(x, \cdot) \right\|_\infty \le \max_{a_i} \mathbb{E}_{\boldsymbol{a}_{-i} \sim \hat{\pi}_{-i}} \Big[ |\mathcal{Z}_i(x, \boldsymbol{a})| \Big].
\end{align}

Notice that taking the maximum over the action $a_i$ is equivalent to taking the supremum over all possible state-conditioned policies for Player $i$. Because the opponent's policy remains fixed to $\hat{\pi}_{-i}$, this shifts the action evaluation distribution from the learned joint policy to a unilateral deviation distribution $(\pi_i', \hat{\pi}_{-i})$. Thus, we can bound the point-wise error by taking the supremum over all such unilateral deviations:
\begin{align}
    \left\| Q_i^{\dagger, \hat{\pi}_{-i}}(x, \cdot) - \bar{Q}_i(x, \cdot) \right\|_\infty \le \sup_{\pi_i'} \mathbb{E}_{\boldsymbol{a} \sim (\pi_i', \hat{\pi}_{-i})} \big[ |\mathcal{Z}_i(x, \boldsymbol{a})| \big],
\end{align}
where the inner action $a_i$ is selected by the deviation policy $\pi_i'$. 

We now square both sides. Applying Jensen's inequality for the inner expectation, we obtain:
\begin{align}
    \left\| Q_i^{\dagger, \hat{\pi}_{-i}}(x, \cdot) - \bar{Q}_i(x, \cdot) \right\|_\infty^2 \le \sup_{\pi_i'} \mathbb{E}_{\boldsymbol{a} \sim (\pi_i', \hat{\pi}_{-i})} \big[ \mathcal{Z}_i(x, \boldsymbol{a})^2 \big] = \max_{a_i} \mathbb{E}_{\boldsymbol{a}_{-i} \sim \hat{\pi}_{-i}} \big[ \mathcal{Z}_i(x, \boldsymbol{a})^2 \big].
\end{align}
Substituting this squared unrolled error back into the sum of Term II and Term III bounds gives:
\begin{align} 
     \mathbb{E}_{x \sim \rho} \big[ \text{Term II}_i(x) + \text{Term III}_i(x) \big] &\le \frac{\eta}{2} \mathbb{E}_{x \sim \rho} \left[ \left\| Q_i^{\dagger, \hat{\pi}_{-i}}(x, \cdot) - \bar{Q}_i(x, \cdot) \right\|_\infty^2 \right] \nonumber \\
    &\le \frac{\eta}{2} \mathbb{E}_{x \sim \rho} \left[ \max_{a_i} \mathbb{E}_{\boldsymbol{a}_{-i} \sim \hat{\pi}_{-i}} \big[ \mathcal{Z}_i(x, \boldsymbol{a})^2 \big] \right].
    \label{Term 2 3 in Z unshift}
\end{align}


\begin{lemma}[Uniform Value Bound]
\label{lem:value_bound}
For any player $i \in [m]$ and any fixed opponent joint policy $\nu_{-i}$, the regularized best-response value function $V_i^{\dagger, \nu_{-i}}(x) = \max_{\pi_i} V_i^{\pi_i, \nu_{-i}}(x)$ satisfies:
\begin{align}
    0 \le V_i^{\dagger, \nu_{-i}}(x) \le 1. \label{eq:V-infty-bound-lse}
\end{align}
Consequently, the uniform infinity norm is strictly bounded: $\left\| V_i^{\dagger, \nu_{-i}} \right\|_\infty \le 1$.
\end{lemma}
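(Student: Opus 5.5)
Fix a context $x$ and an opponent profile $\nu_{-i}$. The plan is to sandwich the best-response value between two simple values, one from below and one from above. Throughout, write the marginalized true reward as
\begin{align}
    q(a_i) \coloneqq \mathbb{E}_{\boldsymbol{a}_{-i}\sim\nu_{-i}(\cdot\mid x)}\big[r_i^\star(x,a_i,\boldsymbol{a}_{-i})\big] \in [0,1],
\end{align}
which lies in $[0,1]$ because $r_i^\star$ takes values in $[0,1]$. With this notation, the regularized value of any deviation $\pi_i$ is
\begin{align}
    V_i^{\pi_i,\nu_{-i}}(x) = \langle \pi_i, q\rangle - \eta^{-1}\mathrm{KL}\big(\pi_i \,\|\, \pi_i^{\mathrm{ref}}(\cdot\mid x)\big).
\end{align}

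\textbf{Lower bound.} Choose the feasible candidate $\pi_i = \pi_i^{\mathrm{ref}}(\cdot\mid x)$. Its KL term vanishes, so its value is $\langle \pi_i^{\mathrm{ref}}, q\rangle \ge 0$ since $q \ge 0$. The best-response value is a maximum over $\pi_i$, so it is at least this, which gives $V_i^{\dagger,\nu_{-i}}(x)\ge 0$.

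\textbf{Upper bound.} For every $\pi_i$ we have $\mathrm{KL}\ge 0$ and $\langle \pi_i, q\rangle \le \|q\|_\infty \le 1$, so $V_i^{\pi_i,\nu_{-i}}(x)\le 1$. Taking the maximum over $\pi_i$ preserves this bound. One can also reach the same conclusion from the log-partition representation used in Step 1, namely $V_i^{\dagger}(x)=\eta^{-1}\log\sum_{a_i}\pi_i^{\mathrm{ref}}(a_i\mid x)\exp(\eta q(a_i))$. Since $0\le q\le 1$, the argument of the logarithm lies in $[1, e^{\eta}]$, which again gives $0\le V_i^\dagger\le 1$.

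\textbf{Conclusion and main obstacle.} Both bounds hold for every $x$, so taking the supremum over $x$ yields $\|V_i^{\dagger,\nu_{-i}}\|_\infty\le 1$. There is no real obstacle here. The only care needed is to check that the maximum is attained (or to argue with the supremum), which holds because the objective is continuous and strictly concave on the compact simplex. One should also note that $\nu_{-i}$ may be correlated across opponents, but this is harmless: the argument only uses the fact that $q$ is a convex combination of values in $[0,1]$.
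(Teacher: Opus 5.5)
Your proof is correct, but your main argument follows a different and more elementary route than the paper. The paper works only with the closed form $V_i^{\dagger,\nu_{-i}}(x)=\eta^{-1}\log\sum_{a_i}\pi_i^{\mathrm{ref}}(a_i\mid x)\exp(\eta q(a_i))$. It then cites the standard fact that this scaled log-sum-exp lies between $\min_{a_i} q(a_i)$ and $\max_{a_i} q(a_i)$, which is essentially the alternative you sketch at the end of your upper-bound paragraph.

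Your main argument never leaves the variational definition. The reference policy is a feasible competitor with zero KL, and dropping the nonnegative KL term caps every competitor at $\max_{a_i} q(a_i)$. This gains you three things:
\begin{enumerate}
\item You do not need the Gibbs variational identity that underlies the log-sum-exp representation.
\item Both bounds apply directly to the supremum, so the attainment question you raise does not matter. Your continuity claim on the whole simplex would also need $\pi_i^{\mathrm{ref}}(\cdot\mid x)$ to have full support, though upper semicontinuity would suffice.
\item You get the slightly sharper lower bound $V_i^{\dagger,\nu_{-i}}(x)\ge\mathbb{E}_{a_i\sim\pi_i^{\mathrm{ref}}(\cdot\mid x)}[q(a_i)]$ instead of $\min_{a_i} q(a_i)$.
\end{enumerate}
The paper's route, in return, reuses the log-partition function $\Phi$ that drives the smoothness step of its main proof. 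The lemma therefore fits directly into that machinery and gives the two-sided bound in one line.
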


\begin{proof}
For a fixed context $x$, the best-response value for Player $i$ against $\nu_{-i}$ can be written in its exact log-sum-exp form:
\begin{align}
    V_i^{\dagger, \nu_{-i}}(x) = \eta^{-1} \log \sum_{a_i \in \mathcal{A}_i} \pi_{\mathrm{ref}, i}(a_i|x) \exp\left( \eta Q_i^{\dagger, \nu_{-i}}(x, a_i) \right),
\end{align}
where $Q_i^{\dagger, \nu_{-i}}(x, a_i) = \mathbb{E}_{\boldsymbol{a}_{-i} \sim \nu_{-i}}[ r_i^\star(x, \boldsymbol{a}) ]$. 
Using the bounds of the deterministic reward $r_i^\star \in [0, 1]$, the Q-value is uniformly bounded by $Q_i^{\dagger, \nu_{-i}}(x, a_i) \in [0, 1]$.

Because the reference policy $\pi_{\mathrm{ref}, i}(\cdot|x)$ is a valid probability distribution that sums to $1$, we apply the standard bounds of the log-sum-exp function: $\eta^{-1}\log\sum p_j \exp(\eta q_j) \in [\min q_j, \max q_j]$. 
Therefore, the best-response value is strictly bounded by the minimum and maximum expected Q-values:
\begin{align}
    \min_{a_i} Q_i^{\dagger, \nu_{-i}}(x, a_i) \le V_i^{\dagger, \nu_{-i}}(x) \le \max_{a_i} Q_i^{\dagger, \nu_{-i}}(x, a_i).
\end{align}
Since all Q-values are bounded in $[0, 1]$, we conclude that $V_i^{\dagger, \nu_{-i}}(x) \in [0, 1]$. This exact characterization eliminates any dependence on the reference policy's minimum probability mass, yielding the uniform bound $\left\| V_i^{\dagger, \nu_{-i}} \right\|_\infty \le 1$.
\end{proof}

To invoke our Reference-Anchored Unilateral Concentrability (Assumption~\ref{ass:concentrability}), we shift the evaluation distribution of the opponents from the learned joint policy $\hat{\pi}_{-i}$ to the fixed reference policy $\pi_{-i}^{\mathrm{ref}}$, while maintaining the arbitrary deviation policy $\pi_i'$ for Player $i$. 

To bound the distribution shift, we utilize the first-order optimality of the regularized Nash Equilibrium. For each opponent $j \neq i$, the marginalized policy $\hat{\pi}_j$ is the unique maximizer of the regularized payoff, which implies the Gibbs form:
\begin{align}
    \hat{\pi}_j(a_j \mid x) = \frac{\pi_j^{\mathrm{ref}}(a_j \mid x) \exp\big(\eta \bar{Q}_{j}(x, a_j)\big)}{\sum_{a_j'} \pi_j^{\mathrm{ref}}(a_j' \mid x) \exp\big(\eta \bar{Q}_{j}(x, a_j')\big)},
\end{align}
where $\bar{Q}_{j}$ is the marginalized empirical Q-function. Since function class $\mathcal{Q}_j$ consists of functions mapping to $[0, 1]$, $\|\bar{Q}_{j}\|_\infty \le 1$. We obtain the point-wise ratio for each action $a_j$:
\begin{align}
    \frac{\hat{\pi}_j(a_j \mid x)}{\pi_j^{\mathrm{ref}}(a_j \mid x)} = \frac{\exp\big(\eta \bar{Q}_{j}(x, a_j)\big)}{\mathbb{E}_{a_j' \sim \pi_j^{\mathrm{ref}}} \exp\big(\eta \bar{Q}_{j}(x, a_j')\big)} \le \frac{\exp(\eta)}{1} = \exp(\eta).
\end{align}
Since the joint policy of the $m-1$ opponents factorizes at each state, the joint density ratio is bounded by $\exp(\eta(m-1))$. To unify our notation across proofs, we define a single global distribution shift constant bounding the density ratio of all $m$ players:
\begin{align}
    \Lambda_{\eta, m} \coloneqq \exp(\eta m).
\end{align}
Thus, the total action likelihood ratio for the $m-1$ opponents is strictly bounded by $\Lambda_{\eta, m}$:
\begin{align}
    \frac{\hat{\pi}_{-i}(\boldsymbol{a}_{-i} \mid x)}{\pi_{-i}^{\mathrm{ref}}(\boldsymbol{a}_{-i} \mid x)} = \prod_{j \neq i} \frac{\hat{\pi}_j(a_j \mid x)}{\pi_j^{\mathrm{ref}}(a_j \mid x)} \le \exp\big(\eta (m-1)\big) \le \Lambda_{\eta, m}.
\end{align}

To relate the expectations over different context-action distributions, we consider the likelihood ratio of a joint action $\boldsymbol{a}$ under the algorithm's deviation policy $(\pi_i', \hat{\pi}_{-i})$ versus the reference deviation policy $(\pi_i', \pi_{-i}^{\mathrm{ref}})$. Because the environment distribution $\rho$ is identical for both, they cancel out in the ratio, leaving only the product of the marginalized opponent policy ratios. 

Using this bound to perform a change of measure, we shift the evaluation from the algorithm's trajectory to the covered unilateral deviation trajectory:
\begin{align}
    \mathbb{E}_{x \sim \rho} \left[ \max_{a_i} \mathbb{E}_{\boldsymbol{a}_{-i} \sim \hat{\pi}_{-i}} \big[ \mathcal{Z}_i(x, \boldsymbol{a})^2 \big] \right] \le \Lambda_{\eta, m} \cdot \mathbb{E}_{x \sim \rho} \left[ \max_{a_i} \mathbb{E}_{\boldsymbol{a}_{-i} \sim \pi_{-i}^{\mathrm{ref}}} \big[ \mathcal{Z}_i(x, \boldsymbol{a})^2 \big] \right].
\end{align}

To formalize the application of our data coverage assumption on this worst-case action, we explicitly construct the deviation policy that maximizes the expected squared error. Let $\tilde{\pi}_i$ be the deterministic policy that greedily selects the error-maximizing action for Player $i$ at each context: $\tilde{\pi}_i(x) \coloneqq \argmax_{a_i} \mathbb{E}_{\boldsymbol{a}_{-i} \sim \pi_{-i}^{\mathrm{ref}}} [\mathcal{Z}_i(x, a_i, \boldsymbol{a}_{-i})^2]$. The resulting joint policy profile $\tilde{\pi} = (\tilde{\pi}_i, \pi_{-i}^{\mathrm{ref}})$ strictly belongs to the set of reference-anchored unilateral deviations $\Pi_{\mathrm{ref-uni}}$. 

By substituting this maximizing policy, we convert the worst-case action into an expectation over the joint distribution $\rho(x)\tilde{\pi}(\boldsymbol{a}|x)$. Applying Assumption~\ref{ass:concentrability}, we shift the evaluation to the offline dataset distribution $\mu$:
\begin{align}
    \mathbb{E}_{x \sim \rho} \left[ \max_{a_i} \mathbb{E}_{\boldsymbol{a}_{-i} \sim \pi_{-i}^{\mathrm{ref}}} \big[ \mathcal{Z}_i(x, \boldsymbol{a})^2 \big] \right] 
    &= \mathbb{E}_{x \sim \rho, \boldsymbol{a} \sim \tilde{\pi}(\cdot|x)} \big[ \mathcal{Z}_i(x, \boldsymbol{a})^2 \big] \nonumber \\
    &= \mathbb{E}_{(x, \boldsymbol{a}) \sim \mu} \left[ \frac{\rho(x)\tilde{\pi}(\boldsymbol{a}|x)}{\mu(x, \boldsymbol{a})} \mathcal{Z}_i(x, \boldsymbol{a})^2 \right] \nonumber \\
    &\le C_{\mathrm{uni}} \mathbb{E}_{\mu} \big[ \mathcal{Z}_i(x, \boldsymbol{a})^2 \big].
\end{align}

Combining this with the opponent distribution shift $\Lambda_{\eta, m}$, together with the previous bound we derived in \eqref{Term 2 3 in Z unshift}, we obtain the fully shifted bound:
\begin{align}
    \mathbb{E}_{x \sim \rho} \big[ \text{Term II}_i + \text{Term III}_i \big] \le \frac{\eta}{2} \Lambda_{\eta, m} C_{\mathrm{uni}} \mathbb{E}_{\mu} \big[ \mathcal{Z}_i^2 \big].
\end{align}

Using the standard fast-rate statistical guarantee for regularized least-squares regression, the expected in-sample squared regression error under the data distribution $\mu$ is bounded by the function class complexity $|\mathcal{Q}_i|$ and the number of samples $n$ with high probability:
\begin{align}
    \mathbb{E}_{\mu} \big[ \mathcal{Z}_i^2 \big] \le \mathcal{O}\left( \frac{\log(|\mathcal{Q}_i|/\delta)}{n} \right).
\end{align}

By substituting this rate into our unrolled bound, we effectively resolve the expectation over $\mu$ by replacing it with its statistical upper limit. We conclude the final suboptimality gap for Player $i$:
\begin{align}
    \mathbb{E}_{x \sim \rho} \big[ \text{Term II}_i + \text{Term III}_i \big] \le \widetilde{\mathcal{O}}\left( \frac{\eta \Lambda_{\eta, m} C_{\mathrm{uni}} \log |\mathcal{Q}_i|}{n} \right).
\end{align}

\paragraph{Final Bound on Total Exploitability.}
The Total Exploitability of the game is defined as the sum of the unilateral suboptimality across all $m$ players. By summing the bound derived for Player $i$ over the entire set $\mathcal{N} = [m]$, and absorbing the distribution shift constant $\Lambda_{\eta, m} = \exp(\eta m)$ and the logarithmic terms into the $\widetilde{\mathcal{O}}$ notation, we obtain:
\begin{align}
    \mathrm{Gap}_{\mathrm{NE}}(\hat{\pi}) &= \sum_{i=1}^m \mathbb{E}_{x \sim \rho} \left[ V_i^{\dagger, \hat{\pi}_{-i}}(x) - V_i^{\hat{\pi}}(x) \right] \nonumber \\
    &\le \sum_{i=1}^m \left( \text{Term II}_i + \text{Term III}_i \right)\\
    &\le \widetilde{\mathcal{O}}\left( \frac{m \eta \Lambda_{\eta, m} C_{\mathrm{uni}} \log |\mathcal{Q}|}{n} \right).
\end{align}
Substituting the bound $\Lambda_{\eta, m} = \exp(\eta m)$, the final statistical rate is:
\begin{align}
    \mathrm{Gap}_{\mathrm{NE}}(\hat{\pi}) \le \widetilde{\mathcal{O}}\left( \frac{m \eta C_{\mathrm{uni}} \exp(\eta m) \log |\mathcal{Q}|}{n} \right).
\end{align}
This result establishes that Algorithm 1 (GANE) achieves a fast statistical rate of $\widetilde{\mathcal{O}}(1/n)$ in the $m$-player general-sum offline setting. By operating in the Contextual Bandit setting, we completely eliminate the exponential horizon dependencies ($\exp(H)$) that artificially inflate pessimism-free RL bounds, explicitly highlighting the optimal statistical efficiency of product-policy equilibrium recovery.

\subsection{Proof of Lemma~\ref{lem:omd_external_regret}}
\label{app:lemma omd proof}
\begin{proof}
We drop player and context subscripts throughout and let $\pi^* \in \argmax_{\pi \in \Delta(\mathcal{A})} \sum_{t=1}^T f^{(t)}(\pi)$ denote an arbitrary comparator. Denote the cumulative objective by $F_t(\pi) \coloneqq \sum_{k=1}^t f^{(k)}(\pi)$, with $F_0 \equiv 0$.

\paragraph{Iterates as Follow-The-Leader.} Unrolling the geometric-mean update in Algorithm~\ref{alg:pro_md} with initialization $\pi^{(1)} = \pi^{\mathrm{ref}}$ and decaying stepsize $\gamma_t = 1/t$ yields the closed form $\pi^{(t+1)}(\cdot) \propto \pi^{\mathrm{ref}}(\cdot) \cdot \exp\!\big( \tfrac{\eta}{t} \sum_{k=1}^t Q^{(k)}(\cdot) \big)$, which is precisely the Follow-The-Leader (FTL) iterate on the cumulative objective,
\begin{align}
    \pi^{(t+1)} = \argmax_{\pi \in \Delta(\mathcal{A})} F_t(\pi). \label{eq:ftl_rep}
\end{align}

\paragraph{Be-The-Leader.} By induction on $T$, we show
\begin{align}
    \sum_{t=1}^T f^{(t)}(\pi^{(t+1)}) \ge \sum_{t=1}^T f^{(t)}(\pi^*). \label{eq:btl_main}
\end{align}
The base case $T = 1$ is immediate since $\pi^{(2)} = \argmax f^{(1)}$. For the inductive step, apply the hypothesis at $T-1$ with comparator $\pi^{(T+1)}$:
\begin{align}
    \sum_{t=1}^{T} f^{(t)}(\pi^{(t+1)}) = \sum_{t=1}^{T-1} f^{(t)}(\pi^{(t+1)}) + f^{(T)}(\pi^{(T+1)}) \ge F_{T-1}(\pi^{(T+1)}) + f^{(T)}(\pi^{(T+1)}) = F_T(\pi^{(T+1)}) \ge F_T(\pi^*),
\end{align}
where the final inequality uses $\pi^{(T+1)} = \argmax F_T$. Rearranging~\eqref{eq:btl_main} bounds the regret by the iterate stability:
\begin{align}
    \sum_{t=1}^T \left( f^{(t)}(\pi^*) - f^{(t)}(\pi^{(t)}) \right) \le \sum_{t=1}^T \left( f^{(t)}(\pi^{(t+1)}) - f^{(t)}(\pi^{(t)}) \right). \label{eq:btl_stability}
\end{align}

\paragraph{Strong-concavity lower bound.} Pinsker's inequality $\mathrm{KL}(p \| q) \ge \tfrac{1}{2} \|p - q\|_1^2$ shows that the negative entropy is $1$-strongly convex with respect to $\|\cdot\|_1$, so each $f^{(t)}$ is $\eta^{-1}$-strongly concave and $F_t$ is $(t/\eta)$-strongly concave. Applying strong concavity at the maximizers $\pi^{(t)} = \argmax F_{t-1}$ and $\pi^{(t+1)} = \argmax F_t$ gives
\begin{align}
    F_{t-1}(\pi^{(t)}) - F_{t-1}(\pi^{(t+1)}) &\ge \tfrac{t-1}{2\eta} \|\pi^{(t+1)} - \pi^{(t)}\|_1^2, \\
    F_t(\pi^{(t+1)}) - F_t(\pi^{(t)}) &\ge \tfrac{t}{2\eta} \|\pi^{(t+1)} - \pi^{(t)}\|_1^2.
\end{align}
Summing the two inequalities and using $F_t - F_{t-1} = f^{(t)}$ yields the per-step stability lower bound
\begin{align}
    f^{(t)}(\pi^{(t+1)}) - f^{(t)}(\pi^{(t)}) \ge \tfrac{2t-1}{2\eta} \|\pi^{(t+1)} - \pi^{(t)}\|_1^2. \label{eq:sc_lower}
\end{align}

\paragraph{H\"older upper bound.} The Gibbs form~\eqref{eq:ftl_rep} gives $\log(\pi^{(t)}/\pi^{\mathrm{ref}}) = \tfrac{\eta}{t-1} \sum_{k=1}^{t-1} Q^{(k)} - \log Z_{t-1}$ for $t \ge 2$, and $\log(\pi^{(1)}/\pi^{\mathrm{ref}}) = 0$ for $t = 1$. Substituting into $\nabla f^{(t)}(\pi) = Q^{(t)} - \eta^{-1}(\log(\pi/\pi^{\mathrm{ref}}) + \mathbf{1})$ and projecting onto the simplex tangent space (i.e., discarding components parallel to $\mathbf{1}$, which vanish against $\pi^{(t+1)} - \pi^{(t)}$ since $\mathbf{1}^\top(\pi^{(t+1)} - \pi^{(t)}) = 0$),
\begin{align}
    \langle \nabla f^{(t)}(\pi^{(t)}), \pi^{(t+1)} - \pi^{(t)} \rangle = \Big\langle Q^{(t)} - \tfrac{1}{t-1} \textstyle\sum_{k=1}^{t-1} Q^{(k)},\, \pi^{(t+1)} - \pi^{(t)} \Big\rangle,
\end{align}
with the convention that the second term vanishes when $t=1$. Because each $Q^{(k)} \in [0,1]^{|\mathcal{A}|}$, the centered gradient has $\ell_\infty$-norm at most $1$. Concavity of $f^{(t)}$ combined with H\"older's inequality therefore gives
\begin{align}
    f^{(t)}(\pi^{(t+1)}) - f^{(t)}(\pi^{(t)}) \le \langle \nabla f^{(t)}(\pi^{(t)}), \pi^{(t+1)} - \pi^{(t)} \rangle \le \|\pi^{(t+1)} - \pi^{(t)}\|_1. \label{eq:holder_upper}
\end{align}

\paragraph{Combining the bounds.} Equations~\eqref{eq:sc_lower} and~\eqref{eq:holder_upper} imply $\|\pi^{(t+1)} - \pi^{(t)}\|_1 \le \tfrac{2\eta}{2t-1}$, and substituting back into~\eqref{eq:holder_upper} gives the per-step stability bound $f^{(t)}(\pi^{(t+1)}) - f^{(t)}(\pi^{(t)}) \le \tfrac{2\eta}{2t-1}$. Summing and applying~\eqref{eq:btl_stability},
\begin{align}
    \sum_{t=1}^T \left( f^{(t)}(\pi^*) - f^{(t)}(\pi^{(t)}) \right) \le \sum_{t=1}^T \frac{2\eta}{2t-1} \le 2\eta (1 + \log T).
\end{align}
Dividing by $T$ yields the claimed $\mathcal{O}\big( \tfrac{\eta \log T}{T} \big)$ bound.
\end{proof}

\subsection{Proof of Theorem~\ref{thm:gamd_cce_rate}}
\label{app:proof_pro_md_cce}

In this section, we provide the full sample complexity proof for General-sum Anchored Mirror Descent (GAMD, Algorithm \ref{alg:pro_md}). The algorithm outputs a time-averaged joint policy $\bar{\pi}$, where $\bar{\pi}(\boldsymbol{a}|x) = \frac{1}{T} \sum_{t=1}^T \prod_{i=1}^m \pi_i^{(t)}(a_i|x)$ is a correlated joint distribution representing an approximate Coarse Correlated Equilibrium (CCE).

We evaluate the convergence of the algorithm using the Total Exploitability Gap. For the learned policy $\bar{\pi}$, the expected gap is defined as the sum of the unilateral improvements available to all $m$ players:
\begin{align}
    \mathrm{Gap}_{\mathrm{CCE}}(\bar{\pi}) := \sum_{i=1}^m \mathbb{E}_{x \sim \rho} \left[ V_i^{\dagger, \bar{\pi}_{-i}}(x) - V_i^{\bar{\pi}}(x) \right].
\end{align}

\subsubsection*{Step 1: Gap Decomposition via Estimated Values}
To decouple the optimization error introduced by running a finite number of Mirror Descent steps $T$ from the statistical error caused by finite offline samples $n$, we pivot through the estimated value functions $\hat{V}$ and $\hat{Q}$ constructed by the algorithm. 

For each player $i \in [m]$, we decompose their unilateral exploitability gap identically to Lemma~\ref{lem:gap_decomp_main}:
\begin{align}
    V_i^{\dagger, \bar{\pi}_{-i}}(x) - V_i^{\bar{\pi}}(x) &= \underbrace{\left( \hat{V}_i^{\dagger, \bar{\pi}_{-i}}(x) - \hat{V}_i^{\bar{\pi}}(x) \right)}_{\text{Term I: Empirical Optimization Gap}} \nonumber \\
    &\quad + \underbrace{\left( V_i^{\dagger, \bar{\pi}_{-i}}(x) - \hat{V}_i^{\dagger, \bar{\pi}_{-i}}(x) \right)}_{\text{Term II: Evaluation Error of Best Response}} \nonumber \\
    &\quad + \underbrace{\left( \hat{V}_i^{\bar{\pi}}(x) - V_i^{\bar{\pi}}(x) \right)}_{\text{Term III: Evaluation Error of Joint Policy}}.
\end{align}

\subsubsection*{Step 2: Bounding the Optimization Error (Term I)}
Term I represents the empirical CCE gap of the time-averaged policy $\bar{\pi}$ on the estimated bandit defined by $\hat{Q}$. We formally bound this by connecting the exploitability of the average joint policy to the external regret of the independent Mirror Descent iterates.

For any context $x$, we evaluate Player $i$'s empirical suboptimality against the time-averaged opponents $\bar{\pi}_{-i}$. For notational clarity, we define the marginalized empirical Q-function as $\bar{Q}_i(x, a_i) \coloneqq \mathbb{E}_{\boldsymbol{a}_{-i} \sim \bar{\pi}_{-i}}[\hat{Q}_i(x, a_i, \boldsymbol{a}_{-i})]$. Because the estimated Q-function is linear in the opponents' distribution, we have exact equality for the expected payoff:
\begin{align}
    \bar{Q}_i(x, a_i) = \frac{1}{T} \sum_{t=1}^T \bar{Q}_i^{(t)}(x, a_i).
\end{align}
Furthermore, by the convexity of the KL-divergence, the regularizer of the average policy is bounded by the average of the regularizers:
\begin{align}
    \mathrm{KL}(\bar{\pi}_i(\cdot|x) \,\|\, \pi_i^{\mathrm{ref}}(\cdot|x)) \le \frac{1}{T} \sum_{t=1}^T \mathrm{KL}(\pi_i^{(t)}(\cdot|x) \,\|\, \pi_i^{\mathrm{ref}}(\cdot|x)).
\end{align}

Applying these two properties, the empirical CCE gap for Player $i$ at context $x$ is strictly bounded by their average external regret against the historical sequence of policies:
\begin{align}
    &\max_{\pi_i} \left[ \bar{Q}_i(x, \pi_i) - \eta^{-1} \mathrm{KL}(\pi_i \| \pi_i^{\mathrm{ref}}) \right] - \left[ \bar{Q}_i(x, \bar{\pi}_i) - \eta^{-1} \mathrm{KL}(\bar{\pi}_i \| \pi_i^{\mathrm{ref}}) \right] \nonumber \\
    &\le \max_{\pi_i \in \Delta(\mathcal{A}_i)} \frac{1}{T} \sum_{t=1}^T \underbrace{\left( \bar{Q}_i^{(t)}(x, \pi_i) - \eta^{-1} \mathrm{KL}(\pi_i \| \pi_i^{\mathrm{ref}}) \right)}_{\coloneqq f_{i,x}^{(t)}(\pi_i)} - \frac{1}{T} \sum_{t=1}^T \underbrace{\left( \bar{Q}_i^{(t)}(x, \pi_i^{(t)}) - \eta^{-1} \mathrm{KL}(\pi_i^{(t)} \| \pi_i^{\mathrm{ref}}) \right)}_{= f_{i,x}^{(t)}(\pi_i^{(t)})}.
\end{align}

At each iteration $t$, Algorithm~\ref{alg:pro_md} independently updates the policies $\pi_i^{(t)}$ via KL-regularized Online Mirror Descent (OMD) on the contextual objectives $f_{i,x}^{(t)}$. To bound the external regret of this update sequence, we rely on foundational results from the Online Convex Optimization (OCO) literature. While standard online learning algorithms typically suffer an $\tilde{\mathcal{O}}(\sqrt{T})$ cumulative regret bound, the explicit presence of the KL-divergence penalty in our formulation strictly transforms the objective into a strongly concave function (when maximizing payoff). As established in classical OCO texts, applying no-regret learning to a strongly concave objective with the decaying stepsize schedule $\gamma_t = 1/t$ matching the update in Algorithm~\ref{alg:pro_md} attains a cumulative regret of $\tilde{\mathcal{O}}(\log T)$. 



By the standard online-to-offline reduction for normal-form games, the total empirical optimization gap is bounded by the expected context-wise average regret. Using Lemma~\ref{lem:omd_external_regret}:
\begin{align}
    \mathbb{E}_{x \sim \rho} [\text{Term I}_i] &\le \mathbb{E}_{x \sim \rho} \left[ \max_{\pi_i} \frac{1}{T} \sum_{t=1}^T \left( f_{i,x}^{(t)}(\pi_i) - f_{i,x}^{(t)}(\pi_i^{(t)}) \right) \right] \nonumber \\
    &\le \mathcal{O}\left( \frac{\eta \log T}{T} \right) = \widetilde{\mathcal{O}}\left( \frac{\eta}{T} \right).
\end{align}
Summing this optimization error over all $m$ players yields a total empirical optimization gap that strictly decays at a fast rate of $\widetilde{\mathcal{O}}(1/T)$.

\subsubsection*{Step 3: Unrolling the Statistical Errors (Terms II and III)}
We now bound the statistical evaluation errors (Term II and Term III). We introduce the regression error for Player $i$ evaluated under the algorithm's estimated Q-functions:
\begin{align}
    \mathcal{Z}_i(x, \boldsymbol{a}) \coloneqq \hat{Q}_i(x, \boldsymbol{a}) - r^\star_i(x, \boldsymbol{a}).
\end{align}

\paragraph{Evaluating Term II (Best Response Error).}
For Term II, we evaluate the suboptimality of the estimated best response. Let $\Delta_i^\dagger(x) \coloneqq V_i^{\dagger, \bar{\pi}_{-i}}(x) - \hat{V}_i^{\dagger, \bar{\pi}_{-i}}(x)$. By applying the 1-smoothness of the log-partition function exactly as derived in the idealized proof (Equation \ref{Delta 1 smooth Q}), we obtain the local bound:
\begin{align}
    \Delta_i^\dagger(x) \le \mathbb{E}_{a_i \sim \bar{\pi}_i^\dagger(\cdot|x)} \left[ Q_i^{\dagger, \bar{\pi}_{-i}}(x, a_i) - \bar{Q}_i(x, a_i) \right] + \frac{\eta}{2} \left\| Q_i^{\dagger, \bar{\pi}_{-i}}(x, \cdot) - \bar{Q}_i(x, \cdot) \right\|_\infty^2,
\end{align}
where $\bar{\pi}_i^\dagger$ is the empirical best-response policy. 

By direct algebraic substitution, the linear Q-value difference is strictly equal to the negative regression error:
\begin{align}
    Q_i^{\dagger, \bar{\pi}_{-i}}(x, a_i) - \bar{Q}_i(x, a_i) = \mathbb{E}_{\boldsymbol{a}_{-i} \sim \bar{\pi}_{-i}} \left[ -\mathcal{Z}_i(x, \boldsymbol{a}) \right].
\end{align}
Taking the expectation over the context distribution $x \sim \rho$, Term II is bounded by:
\begin{align}
    \mathbb{E}_{x \sim \rho}[\text{Term II}_i(x)] \le \mathbb{E}_{x \sim \rho, \boldsymbol{a} \sim (\bar{\pi}_i^\dagger, \bar{\pi}_{-i})} \left[ -\mathcal{Z}_i(x, \boldsymbol{a}) \right] + \frac{\eta}{2} \mathbb{E}_{x \sim \rho} \left[ \left\| Q_i^{\dagger, \bar{\pi}_{-i}}(x, \cdot) - \bar{Q}_i(x, \cdot) \right\|_\infty^2 \right].
\end{align}

\paragraph{Evaluating Term III (Joint Policy Error).}
Term III measures the evaluation error of the algorithm's joint policy: $\hat{V}_i^{\bar{\pi}}(x) - V_i^{\bar{\pi}}(x)$. Because the identical policy $\bar{\pi}$ is used for both the empirical and true values, the KL-regularization terms exactly cancel out. Direct substitution yields:
\begin{align}
    \mathbb{E}_{x \sim \rho}[\text{Term III}_i(x)] = \mathbb{E}_{x \sim \rho, \boldsymbol{a} \sim \bar{\pi}} \left[ \mathcal{Z}_i(x, \boldsymbol{a}) \right].
\end{align}

\subsubsection*{Step 4: The Distribution Mismatch and Absolute Errors}
In the analysis of exact Nash Equilibria, the joint policy is a pure product distribution ($\hat{\pi} = \hat{\pi}_i \times \hat{\pi}_{-i}$) and the learned policy is identically the best response ($\hat{\pi}_i = \hat{\pi}_i^\dagger$). This structural alignment causes the evaluation distributions to match perfectly, allowing the linear errors $-\mathcal{Z}_i$ and $+\mathcal{Z}_i$ to cancel entirely. 

However, the time-averaged policy $\bar{\pi}$ produced by GAMD is a \textit{correlated} joint distribution. Consequently, the independent product of the best response and the marginalized opponents is fundamentally mismatched from the actual correlated joint policy:
\begin{align}
    \bar{\pi}_i^\dagger(a_i|x) \times \bar{\pi}_{-i}(\boldsymbol{a}_{-i}|x) \neq \bar{\pi}(\boldsymbol{a}|x).
\end{align}
Because the expectations for the linear regression errors are taken over different distributions, they do not cancel. To establish a rigorous upper bound, we must bound their absolute values. By the triangle inequality, we maintain the exact squared error term:
\begin{align}
    \text{Term II}_i(x) + \text{Term III}_i(x) &\le \mathbb{E}_{\boldsymbol{a} \sim (\bar{\pi}_i^\dagger, \bar{\pi}_{-i})} \left[ |\mathcal{Z}_i(x, \boldsymbol{a})| \right] + \mathbb{E}_{\boldsymbol{a} \sim \bar{\pi}} \left[ |\mathcal{Z}_i(x, \boldsymbol{a})| \right] \nonumber \\
    &\quad + \frac{\eta}{2} \left\| Q_i^{\dagger, \bar{\pi}_{-i}}(x, \cdot) - \bar{Q}_i(x, \cdot) \right\|_\infty^2.
\end{align}

\subsubsection*{Step 5: Bounding the $L_\infty$ Error and Applying Cauchy-Schwarz}
To bridge the pointwise absolute errors to our least-squares statistical oracle, we apply Cauchy-Schwarz ($\mathbb{E}[|X|] \le \sqrt{\mathbb{E}[X^2]}$). 

For the squared Q-value difference, we must unroll the $L_\infty$ norm into the pointwise squared regression errors. Taking the absolute value of the Q-value difference identity yields:
\begin{align}
    \left\| Q_i^{\dagger, \bar{\pi}_{-i}}(x, \cdot) - \bar{Q}_i(x, \cdot) \right\|_\infty \le \max_{a_i} \mathbb{E}_{\boldsymbol{a}_{-i} \sim \bar{\pi}_{-i}} \Big[ |\mathcal{Z}_i(x, \boldsymbol{a})| \Big].
\end{align}
Squaring both sides and applying Jensen's inequality for the inner expectation bounds the $L_\infty$ norm strictly by the expected squared regression error under a maximally adverse unilateral deviation:
\begin{align}
    \left\| Q_i^{\dagger, \bar{\pi}_{-i}}(x, \cdot) - \bar{Q}_i(x, \cdot) \right\|_\infty^2 \le \max_{a_i} \mathbb{E}_{\boldsymbol{a}_{-i} \sim \bar{\pi}_{-i}} \big[ \mathcal{Z}_i(x, \boldsymbol{a})^2 \big].
\end{align}

Summing the expectations over all players and applying Cauchy-Schwarz to the linear terms, we isolate the statistical gap strictly in terms of squared errors:
\begin{align}
    &\sum_{i=1}^m \mathbb{E}_{x \sim \rho}[\text{Term II}_i(x) + \text{Term III}_i(x)] \nonumber \\
    &\le \sum_{i=1}^m \left( \sqrt{ \mathbb{E}_{x \sim \rho, \boldsymbol{a} \sim (\bar{\pi}_i^\dagger, \bar{\pi}_{-i})} \left[ \mathcal{Z}_i^2 \right] } + \sqrt{ \mathbb{E}_{x \sim \rho, \boldsymbol{a} \sim \bar{\pi}} \left[ \mathcal{Z}_i^2 \right] } \right) \nonumber \\
    &\quad + \frac{\eta}{2} \sum_{i=1}^m \mathbb{E}_{x \sim \rho} \left[ \max_{a_i} \mathbb{E}_{\boldsymbol{a}_{-i} \sim \bar{\pi}_{-i}} \big[ \mathcal{Z}_i(x, \boldsymbol{a})^2 \big] \right].
\end{align}

\subsubsection*{Step 6: Distribution Shift and Data Coverage}
We bound each expected term by shifting the evaluation distribution to the offline dataset distribution $\mu$ via the Reference-Anchored Unilateral Concentrability (Assumption~\ref{ass:concentrability}). 

Crucially, because GAMD explicitly constructs the correlated policy via exponentiated Mirror Descent updates, we can strictly bound the density ratio of every individual iterate $\pi_{i}^{(t)}$ relative to the reference policy. Initialized at $\pi_i^{(1)} = \pi_i^{\mathrm{ref}}$, the decaying-stepsize closed-form update (Step~\ref{eq:md_update_closed_form}, Algorithm~\ref{alg:pro_md}) unrolls to the Gibbs form anchored to the reference policy:
\begin{align}
    \pi_i^{(t+1)}(a_i|x) \propto \pi_i^{\mathrm{ref}}(a_i|x) \exp\big(\eta \tilde{Q}_i^{(t)}(x, a_i)\big), \quad \tilde{Q}_i^{(t)}(x, a_i) \coloneqq \frac{1}{t} \sum_{k=1}^{t} \bar{Q}_i^{(k)}(x, a_i).
\end{align}
Since the function class $\mathcal{Q}_j$ consists of functions mapping to $[0, 1]$, we have $\|\bar{Q}_i^{(k)}\|_\infty \le 1$, so the arithmetic average $\tilde{Q}_i^{(t)} \in [0, 1]$. Evaluating the normalization constant yields the exact pointwise bound for any action:
\begin{align}
    \frac{\pi_i^{(t+1)}(a_i|x)}{\pi_i^{\mathrm{ref}}(a_i|x)} = \frac{\exp\big(\eta \tilde{Q}_i^{(t)}(x, a_i)\big)}{\mathbb{E}_{a' \sim \pi_i^{\mathrm{ref}}} \exp\big(\eta \tilde{Q}_i^{(t)}(x, a')\big)} \le \frac{\exp(\eta)}{1} = \exp(\eta).
\end{align}

Because the marginalized opponent policy $\bar{\pi}_{-i}$ and the joint policy $\bar{\pi}$ are convex combinations of these iterates, they rigorously preserve this upper bound. To unify our notation across the $m$ players, we define a single global distribution shift constant:
\begin{align}
    \Lambda_{\eta, m} \coloneqq \exp(\eta m).
\end{align}

For the unilateral deviation trajectories, we encounter two distinct expectations: one over the empirical best response profile $(\bar{\pi}_i^\dagger, \bar{\pi}_{-i})$ and one involving a worst-case maximization $\max_{a_i} \mathbb{E}_{\boldsymbol{a}_{-i} \sim \bar{\pi}_{-i}}[\cdot]$. In both cases, shifting the $m-1$ opponents from $\bar{\pi}_{-i}$ to the reference policy $\pi_{-i}^{\mathrm{ref}}$ produces a joint density ratio bounded by $\exp(\eta(m-1)) \le \Lambda_{\eta, m}$. 

For the first term, the profile $(\bar{\pi}_i^\dagger, \pi_{-i}^{\mathrm{ref}})$ strictly belongs to $\Pi_{\mathrm{ref-uni}}$, allowing us to directly apply the concentrability coefficient $C_{\mathrm{uni}}$:
\begin{align}
    \mathbb{E}_{x \sim \rho, \boldsymbol{a} \sim (\bar{\pi}_i^\dagger, \bar{\pi}_{-i})} \left[ \mathcal{Z}_i^2 \right] \le \Lambda_{\eta, m} \mathbb{E}_{x \sim \rho, \boldsymbol{a} \sim (\bar{\pi}_i^\dagger, \pi_{-i}^{\mathrm{ref}})} \left[ \mathcal{Z}_i^2 \right] \le \Lambda_{\eta, m} C_{\mathrm{uni}} \mathbb{E}_{\mu} \left[ \mathcal{Z}_i^2 \right].
\end{align}

For the second term, to formalize the application of our data coverage assumption on the maximization, we construct a deterministic greedy policy $\tilde{\pi}_i(x) \coloneqq \argmax_{a_i} \mathbb{E}_{\boldsymbol{a}_{-i} \sim \pi_{-i}^{\mathrm{ref}}} [\mathcal{Z}_i(x, a_i, \boldsymbol{a}_{-i})^2]$. The resulting joint profile $\tilde{\pi} = (\tilde{\pi}_i, \pi_{-i}^{\mathrm{ref}})$ also strictly belongs to $\Pi_{\mathrm{ref-uni}}$, yielding:
\begin{align}
    \mathbb{E}_{x \sim \rho} \left[ \max_{a_i} \mathbb{E}_{\boldsymbol{a}_{-i} \sim \bar{\pi}_{-i}} \big[ \mathcal{Z}_i(x, \boldsymbol{a})^2 \big] \right] 
    &\le \Lambda_{\eta, m} \mathbb{E}_{x \sim \rho, \boldsymbol{a} \sim \tilde{\pi}(\cdot|x)} \big[ \mathcal{Z}_i(x, \boldsymbol{a})^2 \big] \nonumber \\
    &= \Lambda_{\eta, m} \mathbb{E}_{(x, \boldsymbol{a}) \sim \mu} \left[ \frac{\rho(x)\tilde{\pi}(\boldsymbol{a}|x)}{\mu(x, \boldsymbol{a})} \mathcal{Z}_i(x, \boldsymbol{a})^2 \right] \nonumber \\
    &\le \Lambda_{\eta, m} C_{\mathrm{uni}} \mathbb{E}_{\mu} \big[ \mathcal{Z}_i^2 \big].
\end{align}

Similarly, for the on-policy evaluation over the correlated joint policy $\bar{\pi}$, we shift the actions of all $m$ players to the joint reference policy $\pi^{\mathrm{ref}} = \prod_{j=1}^m \pi_j^{\mathrm{ref}}$. The total density ratio is strictly bounded by $\prod_{j=1}^m \exp(\eta) = \exp(\eta m) = \Lambda_{\eta, m}$. Because the joint reference policy $\pi^{\mathrm{ref}}$ is explicitly covered by the unilateral assumption set $\Pi_{\mathrm{ref-uni}}$ (representing the trivial case where the deviation policy equals the reference policy), we directly apply $C_{\mathrm{uni}}$ to bound the joint evaluation:
\begin{align}
    \mathbb{E}_{x \sim \rho, \boldsymbol{a} \sim \bar{\pi}} \left[ \mathcal{Z}_i^2 \right] \le \Lambda_{\eta, m} C_{\mathrm{uni}} \mathbb{E}_{\mu} \left[ \mathcal{Z}_i^2 \right].
\end{align}

By applying the fast-rate guarantee for regularized least-squares regression, the in-sample expected squared regression error is bounded by $\mathcal{O}\left( \frac{\log |\mathcal{Q}_i|}{n} \right)$. Substituting this statistical rate into the square-root and linear components yields:
\begin{align}
    &\sum_{i=1}^m \mathbb{E}_{x \sim \rho}[\text{Term II}_i(x) + \text{Term III}_i(x)] \nonumber \\
    &\le \sum_{i=1}^m 2 \sqrt{ \Lambda_{\eta, m} C_{\mathrm{uni}} \mathcal{O}\left( \frac{\log |\mathcal{Q}_i|}{n} \right) } + \sum_{i=1}^m \mathcal{O}\left( \frac{\eta \Lambda_{\eta, m} C_{\mathrm{uni}} \log |\mathcal{Q}_i|}{n} \right).
\end{align}

\subsubsection*{Step 7: Final Bound Synthesis}
Summing the empirical optimization error from Step 2 and the combined statistical evaluation error from Step 6, we obtain the final bound on the expected Total Exploitability Gap. Because the absolute linear errors decay at $\tilde{\mathcal{O}}(1/\sqrt{n})$, they asymptotically dominate the $\tilde{\mathcal{O}}(1/n)$ squared Q-difference term. 
\begin{align}
    \mathbb{E} [\mathrm{Gap}_{\mathrm{CCE}}(\bar{\pi})] &= \sum_{i=1}^m \mathbb{E}_{x \sim \rho} \left[ \text{Term I}_i(x) + \text{Term II}_i(x) + \text{Term III}_i(x) \right] \nonumber \\
    &\le \widetilde{\mathcal{O}}\left( \frac{m \eta}{T} \right) + \widetilde{\mathcal{O}}\left( m \sqrt{ \frac{\Lambda_{\eta, m} C_{\mathrm{uni}} \log |\mathcal{Q}|}{n} } \right) + \widetilde{\mathcal{O}}\left( \frac{1}{n} \right).
\end{align}
Here, $\Lambda_{\eta, m} \coloneqq \exp(\eta m)$, and $ C_{\mathrm{uni}}$ is the coefficient for unilateral coverage. 
Setting the number of Mirror Descent iterations such that $T \ge \sqrt{n}$ ensures that the empirical optimization error is strictly dominated by the statistical evaluation error. The final sample complexity is therefore:
\begin{align}
    \mathbb{E} [\mathrm{Gap}_{\mathrm{CCE}}(\bar{\pi})] \le \widetilde{\mathcal{O}}\left( \frac{1}{\sqrt{n}} \right).
\end{align}
This result demonstrates that while GAMD attains a fast $\widetilde{\mathcal{O}}(1/T)$ empirical optimization rate, the fundamental distribution mismatch inherent to evaluating a correlated joint policy strictly bottlenecks the statistical performance at the standard minimax rate of $\widetilde{\mathcal{O}}(1/\sqrt{n})$.



\end{document}